\pgfplotsset{compat=1.18}  % Use the latest version
\newcommand{\cmark}{\ding{51}}%
\newcommand{\xmark}{\ding{55}}%
\theoremstyle{plain}
\newtheorem{theorem}{Theorem}
\newtheorem{lemma}[theorem]{Lemma}
\theoremstyle{definition}
\theoremstyle{remark}
\newtheorem*{theorem*}{Theorem}
\newtheorem*{lemma*}{Lemma}
\newtheorem{definition*}{Definition}
\definecolor{pastelgreen}{RGB}{198,239,206}  % Good
\definecolor{pastelyellow}{RGB}{255,255,153} % Okay
\definecolor{pastelred}{RGB}{255,183,153}    % Bad
\definecolor{darkpurple}{RGB}{48, 25, 52}
\definecolor{darkbrick}{RGB}{128, 0, 0}  
\definecolor{darkforest}{RGB}{0, 59, 0}
\definecolor{darknavy}{RGB}{0, 0, 89}
\title{Fed-SB: A \textit{Silver Bullet} for Extreme Communication Efficiency and Performance in (Private) Federated LoRA Fine-Tuning}
\author{
  \textbf{Raghav Singhal}\textsuperscript{*1}, 
  \textbf{Kaustubh Ponkshe}\textsuperscript{*1}, 
  \textbf{Rohit Vartak}\textsuperscript{2},
  \textbf{Lav R. Varshney}\textsuperscript{3},\\
  \textbf{Praneeth Vepakomma}\textsuperscript{1,4} \\
  \textsuperscript{1} Mohamed bin Zayed University of Artificial Intelligence, UAE 
  \textsuperscript{2} Duke University, USA\\
  \textsuperscript{3} University of Illinois Urbana-Champaign, USA
  \textsuperscript{4} Massachusetts Institute of Technology, USA
}
\begin{document}

\maketitle

\begin{abstract}
Low-Rank Adaptation (LoRA) has become ubiquitous for efficiently fine-tuning foundation models. 
However, federated fine-tuning using LoRA is challenging due to suboptimal updates arising from traditional federated averaging of individual adapters. 
Existing solutions either incur prohibitively high communication cost that scales linearly with the number of clients or suffer from performance degradation due to limited expressivity.
We introduce \textbf{Federated Silver Bullet (Fed-SB)}, a novel approach for federated fine-tuning of LLMs using LoRA-SB, a recently proposed low-rank adaptation method. 
LoRA-SB optimally aligns the optimization trajectory with the ideal low-rank full fine-tuning projection by learning a small square matrix ($R$) between adapters $B$ and $A$, keeping other components fixed.
Direct averaging of $R$ guarantees exact updates, substantially reducing communication cost, which remains independent of the number of clients, and enables scalability.
Fed-SB achieves \textbf{state-of-the-art performance} across commonsense reasoning, arithmetic reasoning, and language inference tasks while reducing communication costs by up to \textbf{230x}. 
In private settings, Fed-SB further improves performance by (1) reducing trainable parameters, thereby lowering the noise required for differential privacy and (2) avoiding noise amplification introduced by other methods. 
Overall, Fed-SB offers a state-of-the-art, efficient, and scalable solution for both private and non-private federated fine-tuning.
Our code is publicly available at: \url{https://github.com/CERT-Lab/fed-sb}.
%Our code is available anonymously at: \url{https://anonymous.4open.science/r/fed-sb-anonymous-6F3D}.

\end{abstract}

\section{Introduction}
\label{intro}
Large language models (LLMs) have demonstrated remarkable generalization across a wide range of tasks \citep{achiam2023gpt, touvron2023llama-2, team2023gemini, raffel2020exploring}.
Fine-tuning (FT) remains the most effective approach for aligning LLMs to specific data distributions and reinforcing desired properties. 
However, as model sizes scale, full FT becomes increasingly prohibitive due to its substantial computational cost. 
To address this, parameter-efficient fine-tuning (PEFT) techniques, such as low-rank adaptation (LoRA, \cite{lora}), have emerged as viable alternatives, offering a favorable trade-off between computational efficiency and performance. 
Variants of LoRA, including QLoRA \citep{qlora}, DoRA \citep{liu2024doraweightdecomposedlowrankadaptation}, AdaLoRA \citep{adalora}, and LoRA-SB \citep{ponkshe2024initialization}, further refine this paradigm by optimizing memory efficiency, training dynamics, and generalization.

Federated learning (FL) is a popular method for training models in settings where data is siloed across multiple entities \citep{konečný2017federatedlearningstrategiesimproving, kairouz2021advances, bonawitz2019federatedlearningscaledesign}.
Federated FT extends this paradigm by enabling large models, pre-trained on public data, to be efficiently adapted to private, distributed datasets without requiring clients to share their local data. 
Existing methods predominantly rely on LoRA-based techniques to learn client-specific adaptations \citep{FedIT}. 
However, optimizing federated aggregation often involves tradeoffs between model performance \citep{sun2024improving} and communication efficiency \citep{wang2024flora, singhal2024exact}, necessitating careful design choices to balance these competing objectives.

\begin{figure*}[!h]
    \centering
    \includegraphics[width=\textwidth]{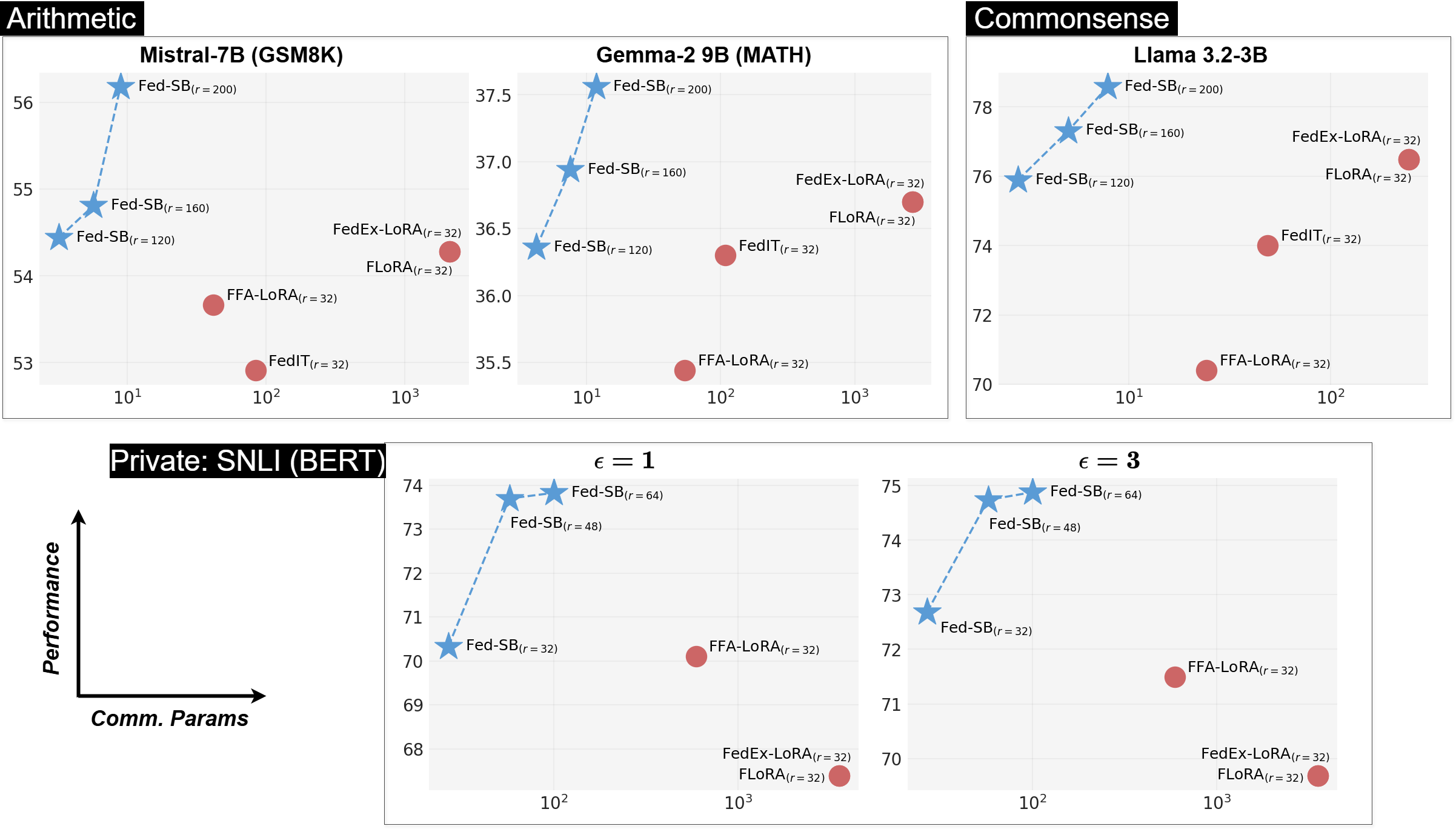}
    \caption{Performance vs. communicated parameter cost (log scale) for Fed-SB and other federated fine-tuning methods in both non-private and privacy-preserving federated settings. Fed-SB advances the performance-communication cost Pareto frontier across all models and tasks, achieving \textbf{state-of-the-art} accuracy while significantly reducing communication cost. Communicated parameters are in thousands for BERT and millions for other models.}
    \label{fig:intro-results}
\end{figure*}

LoRA-SB \citep{ponkshe2024initialization}, a state-of-the-art approach, optimally simulates full fine-tuning in low-rank spaces by learning an \( r \times r \) matrix between the low-rank adapters \( \mathbf{A} \) and \( \mathbf{B} \) while keeping other components fixed.  
This design reduces trainable parameters and enables better updates through its initialization strategy.  
Moreover, LoRA-SB demonstrates that this optimal approximation is not achievable with standard LoRA-based methods.
LoRA-SB learns higher-rank updates with 2–4x greater rank than LoRA while requiring \textbf{45-90x} fewer parameters.
We propose \textbf{Fed-SB}, a federated variant of LoRA-SB, providing an ideal framework for (private) federated FT. 
Fed-SB overcomes limitations in LoRA-based federated FT while being significantly more computation- and communication-efficient. 
Notably, it enables exact and optimal aggregation by simply averaging the learnable matrix \(\mathbf{R}\).

Differential privacy (DP) is a well-established framework for ensuring strong privacy guarantees \citep{dwork2006differential, dwork2014algorithmic}, which is particularly crucial in federated settings. 
DP-SGD is a widely used privacy-preserving optimization method \citep{dgsgd}, but its challenges are exacerbated in federated FT, where noise injected for privacy amplifies divergence across client models \citep{sun2024improving}. 
Learning in DP-SGD is more effective when the number of learnable parameters is reduced, as the magnitude of noise added for privacy guarantees scales with the parameter count. 
Fed-SB mitigates this issue to yield improved performance, since it inherently has fewer learnable parameters and thus less noise injection.
Furthermore, we show that Fed-SB avoids noise amplification introduced by other methods, further enhancing privacy-preserving learning.

Fed-SB pushes the performance vs communication cost Pareto frontier, offering an extremely efficient and scalable solution for both private and non-private federated FT, as shown in Figure \ref{fig:intro-results}. 
It consistently has superior performance while substantially reducing communication overhead than other methods. 
Our key contributions are summarized as follows:

\begin{itemize}[leftmargin=*,itemsep=0pt]
    \item We propose \textbf{Fed-SB}, a federated fine-tuning method that achieves exact and optimal aggregation in low-rank adaptation without incurring prohibitive communication costs or performance degradation.
    \item Fed-SB consistently achieves \textbf{state-of-the-art} results while significantly reducing communication cost, by up to \textbf{230x}, by requiring only an \( r \times r \) matrix to be transmitted per aggregation.
    \item We demonstrate that Fed-SB is particularly well-suited for privacy-preserving (federated) fine-tuning, as it minimizes noise by reducing the number of learnable parameters and leveraging linearity in the aggregate update.
    \item Extensive experiments on $4$ models across $3$ diverse benchmarks show that Fed-SB consistently outperforms existing methods while drastically reducing communication overhead in both private and non-private federated settings, establishing a new Pareto frontier in federated fine-tuning.
\end{itemize}

\begin{table}[ht]
\centering
\caption{Advantages of Fed-SB over various SOTA federated fine-tuning methods ($c$ clients). Fed-SB achieves exact aggregation and high expressivity with extremely low communication cost - constant with the number of clients. In private settings, Fed-SB offers additional advantages by minimizing noise through reducing learnable parameters and leveraging linearity to avoid noise amplification.}
\setlength{\tabcolsep}{0pt}
\small
\begin{tabular}{lccccc}
\toprule
 & \bf FedIT & \bf FLoRA & \bf FedEx-LoRA & \bf FFA-LoRA & \bf Fed-SB \\
\midrule
Exact aggregation &
  \cellcolor{pastelred}\xmark & 
  \cellcolor{pastelgreen}\cmark &
  \cellcolor{pastelgreen}\cmark &
  \cellcolor{pastelgreen}\cmark &
  \cellcolor{pastelgreen}\cmark \\
Learnable params. &
  \cellcolor{pastelyellow}$\mathcal{O}((m+n)r)$ &
  \cellcolor{pastelyellow}$\mathcal{O}((m+n)r)$ &
  \cellcolor{pastelyellow}$\mathcal{O}((m+n)r)$ &
  \cellcolor{pastelyellow}$\mathcal{O}(mr)$ &
  \cellcolor{pastelgreen}$\mathcal{O}(r^2)$ \\
Communication cost  &
  \cellcolor{pastelyellow}$\mathcal{O}((m+n)r)$ &
  \cellcolor{pastelred}$\mathcal{O}(min(c(m+n)r,mn))$ &
  \cellcolor{pastelred}$\mathcal{O}(min(c(m+n)r,mn))$ &
  \cellcolor{pastelyellow}$\mathcal{O}(mr)$ &
  \cellcolor{pastelgreen}$\mathcal{O}(r^2)$ \\
No noise ampl. &
  \cellcolor{pastelred}\xmark &
  \cellcolor{pastelred}\xmark &
  \cellcolor{pastelred}\xmark &
  \cellcolor{pastelgreen}\cmark &
  \cellcolor{pastelgreen}\cmark \\
Privacy (less params.) &
  \cellcolor{pastelred}\xmark &
  \cellcolor{pastelred}\xmark &
  \cellcolor{pastelred}\xmark &
  \cellcolor{pastelred}\xmark &
  \cellcolor{pastelgreen}\cmark \\
Optimal expressivity &
  \cellcolor{pastelgreen}\cmark &
  \cellcolor{pastelgreen}\cmark &
  \cellcolor{pastelgreen}\cmark &
  \cellcolor{pastelred}\xmark &
  \cellcolor{pastelgreen}\cmark \\
\bottomrule
\end{tabular}
\label{tab:methods-comparison}
\end{table}

\section{Preliminaries and Motivation} \label{sec:motivation}

\textbf{Federated Fine-Tuning.}  
Given a pretrained weight matrix \( \mathbf{W} \in \mathbb{R}^{m \times n} \), the objective in FT is to learn an update \( \Delta \mathbf{W} \) for a given dataset.  
LoRA \citep{lora} remains the preferred method, where low-rank adapter matrices \( \mathbf{A} \in \mathbb{R}^{r \times n} \) and \( \mathbf{B} \in \mathbb{R}^{m \times r} \) are learned such that \( \Delta \mathbf{W} = \mathbf{B} \mathbf{A} \).  
In federated learning, the dataset is distributed across \( c \) clients, and the goal is to learn \( \Delta \mathbf{W} \) without sharing local data with a central server.  
To achieve this, each client learns its own adapter matrices \( \mathbf{A}_i \) and \( \mathbf{B}_i \).  
The server aggregates these updates to refine \( \mathbf{W} \), along with globally beneficial representations of \( \mathbf{A} \) and \( \mathbf{B} \), ultimately producing a shared aggregate model \( \mathbf{W}^{\text{agg}} \).
Next, each client continues the local FT process, followed by aggregation at the end of each round. This cycle repeats over multiple rounds.
We summarize some of the state-of-the-art federated FT methods below.

\textbf{Fed-IT} \citep{FedIT} updates the adapters \( \mathbf{A} \) and \( \mathbf{B} \) using the standard FedAvg \citep{mcmahan2017communication} algorithm:
\begin{align}
    \mathbf{A}^{\text{agg}} = \frac{1}{c} \sum_{i=1}^{c} \mathbf{A}_i, \quad 
    \mathbf{B}^{\text{agg}} = \frac{1}{c} \sum_{i=1}^{c} \mathbf{B}_i.
    \label{eq:fedit}
\end{align}
\textbf{FedEx-LoRA} \citep{singhal2024exact} follows the same aggregation but introduces an additional error correction matrix \( \mathbf{W}_{\text{err}} \) of rank \( \min(c r, m, n) \):
\begin{align}
    \mathbf{W}_{\text{err}} = (\frac{1}{c}\sum_{i=1}^{c}\mathbf{A}_i \mathbf{B}_i)-(\frac{1}{c}\sum_{i=1}^{c}\mathbf{A}_i )(\frac{1}{c}\sum_{i=1}^{c} \mathbf{B}_i).
\end{align}
\textbf{FLoRA} \citep{wang2024flora} follows the same principle as FedEx-LoRA but achieves it by stacking the adapter matrices, and reinitializes them randomly at the end of each communication round.
\textbf{FFA-LoRA} \citep{sun2024improving} keeps \( \mathbf{A} \) fixed while training (and aggregating) only \( \mathbf{B} \) matrices.
\begin{align}
    \mathbf{B}^{\text{agg}} = \frac{1}{c} \sum_{i=1}^{c} \mathbf{B}_i.
\end{align}

\begin{align}  \underbrace{\Tilde{\mathbf{W}}^{global} =  \mathbf{W}_0 + \frac{1}{k} \sum_{i=1}^k \mathbf{B}_i \times \frac{1}{k} \sum_{i=1}^k \mathbf{A}_i}_{\text{Parameters after aggregation with LoRA + FedAvg (FedIT)}}  \neq \underbrace{\mathbf{W}_0 + \frac{1}{k} \sum_{i=1}^k (\mathbf{B}_i \mathbf{A}_i) = \mathbf{W}^{global}}_{\text{Ideal parameters following model-averaging}}
    \label{eq:fedavg-inexact}  
\end{align}
\textbf{(Approximate) Differential Privacy.} 
DP, introduced by \cite{dwork2006differential}, is a widely adopted mathematical framework for privacy preservation. A randomized mechanism \( \mathcal{M}: \mathcal{D} \rightarrow \mathcal{R} \), mapping a domain \( \mathcal{D} \) to a range \( \mathcal{R} \), satisfies \((\epsilon, \delta)\)-differential privacy if, for any two adjacent inputs \( d, d^{\prime} \in \mathcal{D} \) and any subset of outputs \( S \subseteq \mathcal{R} \), the following holds:
\begin{align}
    \operatorname{Pr}[\mathcal{M}(d) \in S] \leq e^{\epsilon} \operatorname{Pr}[\mathcal{M}(d^{\prime}) \in S] + \delta.
\end{align}

\begin{gather}
    \mathbf{B}^{j+1}_i \leftarrow \frac{1}{k} \sum_{i=1}^k \mathbf{B}^{j}_i, \mathbf{A}^{j+1}_i \leftarrow \frac{1}{k} \sum_{i=1}^k \mathbf{A}^{j}_i, \mathbf{W_0}^{j+1} \leftarrow \mathbf{W_0}^{j} 
    + \underbrace{\frac{1}{k} \sum_{i=1}^k (\mathbf{B}^{j}_i \mathbf{A}^{j}_i) - \frac{1}{k} \sum_{i=1}^k \mathbf{B}^{j}_i \times \frac{1}{k} \sum_{i=1}^k \mathbf{A}^{j}_i}_{\text{Residual}} 
\end{gather}
% This definition, known as approximate DP, allows for a small probability \( \delta \) of privacy leakage, making it a more flexible generalization of pure \(\epsilon\)-DP.
% DP introduced in \citet{dwork2006differential} is a popular mathematical notion of privacy. A randomized mechanism $\mathcal{M}: \mathcal{D} \rightarrow \mathcal{R}$ with domain $\mathcal{D}$ and range $\mathcal{R}$ satisfies $(\epsilon, \delta)$-differential privacy \cite{dwork2006differential} if for any two adjacent inputs $d, d^{\prime} \in \mathcal{D}$ and for any subset of outputs $S \subseteq \mathcal{R}$ it holds that 
% \begin{align}
%  \operatorname{Pr}[\mathcal{M}(d) \in S] \leq e^{\epsilon} \operatorname{Pr}\left[\mathcal{M}\left(d^{\prime}\right) \in S\right]+\delta   
% \end{align} 
% This form of DP is also referred to as approximate DP. 
%Typically randomization or noising is used in designing differentially private mechanisms to release outputs of queries operating on sensitive data.
\textbf{DP-SGD.} 
DP-SGD \citep{dgsgd} is a privacy-preserving variant of stochastic gradient descent (SGD) designed to ensure DP during training. 
It enforces privacy by clipping per-sample gradients to a fixed norm \( C \) to limit their sensitivity and then adding isotropic Gaussian noise \( \mathcal{N}\left(0, \sigma^2 C^2 \mathbf{I}\right) \), where \( \sigma \) controls the noise magnitude. 
The cumulative privacy loss over iterations is quantified using the moments accountant \citep{moments} and Rényi DP \citep{mironov2017renyi}, which offer a tight bound on the final privacy parameter \( \epsilon \).
% This framework aligns with Rényi differential privacy \citep{mironov2017renyi}, an alternative formulation that enables more precise privacy accounting.
% DP-SGD \cite{dgsgd} is a privacy preserving mechanism to perform standard stochastic gradient descent (SGD) for training a model. 
% It employs clipping of gradients at every optimization update in order to bound their variation to sensitive data followed by an addition of isotropic Gaussian noise $\mathcal{N}\left(0, \sigma^2 C^2 \mathbf{I}\right)$ with the parameters of $\sigma$ and $C$, calibrated to ensure differential privacy. 
% Repeat accesses to sensitive data (including data-dependent compositions), lead to a degradation in the privacy level. 
% The final privacy level $\epsilon$ obtained after the DP-SGD iterations is in fact calibrated through a budgeting procedure called the moments accountant \cite{moments}, that is motivated by an equivalent form of differential privacy called Renyi DP \cite{mironov2017renyi}. 
% This equivalent form comes along with simpler privacy budgeting for this purpose.
\\

\textbf{Exact Aggregation in Fed. LoRA: Tradeoff b/w Performance and Communication Costs.}  

Standard federated averaging of individual LoRA adapters (FedIT \cite{FedIT}) introduces \textit{inexactness} in aggregation, as the ideal update should be the average of client updates.
\begin{gather}  \underbrace{\mathbf{W}_0 + \frac{1}{c} \sum_{i=1}^c \mathbf{B}_i \times \frac{1}{c} \sum_{i=1}^c \mathbf{A}_i}_{\text{Vanilla aggregation in LoRA (FedIT)}} \neq \underbrace{\mathbf{W}_0 + \frac{1}{c} \sum_{i=1}^c (\mathbf{B}_i \mathbf{A}_i)}_{\text{Ideal aggregation}}.
    \label{eq:fedavg-inexact}  
\end{gather}
The inexactness arises because the ideal averaged updates, given by \( \sum_{i=1}^{c} \mathbf{B}_i \mathbf{A}_i \), often exceed rank \( r \), violating the low-rank constraint imposed by LoRA.  
To address this, FedEx-LoRA and FLoRA introduce \( \mathbf{W}_{\text{err}} \) as a higher-rank correction term within the pre-trained weight matrix \( \mathbf{W}_0 \), which is inherently high-rank.  
This correction ensures exact aggregation, leading to consistently improved performance over FedIT.

This, however, comes at the cost of increased communication.  
Since the error matrix is high rank, it substantially increases the amount of data transmitted per round.  
The communication cost is determined by the number of parameters sent during aggregation, which, for an \( m \times n \) matrix, is proportional to its rank.  
As a result, in FedEx-LoRA and similar methods that enforce exact aggregation, communication cost scales linearly with the number of clients relative to Fed-IT.  
This becomes particularly concerning when the number of clients grows large, \textbf{potentially requiring the transmission of the entire model’s weights}.

FFA-LoRA addresses inexact aggregation by keeping only \( \mathbf{B} \) trainable while fixing \( \mathbf{A} \) uniformly across clients.  
However, this comes at the cost of reduced expressivity and limits the benefits of jointly optimizing \( \mathbf{A} \) and \( \mathbf{B} \).  
As a result, performance degrades, as demonstrated previously \citep{singhal2024exact}.  
This stems from two factors: suboptimal individual updates and the need for higher-rank adaptations.  
Freezing \( \mathbf{A} \) leads to suboptimal updates, even in centralized training, where FFA-LoRA underperforms compared to LoRA.  
Additionally, recent work \cite{mahla2025exploringgradientsubspacesaddressing} shows that models trained using FFA-LoRA progressively deviate from the optimal hypothesis.  
Empirical evidence shows that the advantages of exactness are outweighed by the degradation caused by these factors.

\textbf{Private Fine-Tuning.} 
Pre-training on public data followed by FT on user-specific private data\footnote{Although pre-training data may be public, it often contains sensitive or proprietary information, raising privacy concerns. However, any privacy loss from pre-training has already occurred upon the model’s release.} is a common approach for adapting models under privacy constraints \citep{yu2021differentially, tang2024private}.  
This two-stage process enhances performance in private learning while preserving user data privacy.  
FL naturally improves privacy by keeping data decentralized. 
However, even without direct data sharing, client-specific model updates can still leak sensitive information \citep{truong2021privacy}.  
Thus, developing privacy-preserving FT methods for FL is essential to ensure strong privacy guarantees while maintaining performance.

Training a model with DP-SGD introduces noise into the gradient, and consequently, into the model update itself. In the case of LoRA, this deviation from the ideal update is more pronounced than in full FT due to second-order noise terms.  
To illustrate this, let \( \mathbf{A} \) and \( \mathbf{B} \) represent the adapter updates learned without privacy. Under DP-SGD, these updates are perturbed by noise terms \( \boldsymbol{\xi}_A \) and \( \boldsymbol{\xi}_B \), respectively. The difference between the ideal update \( \Delta \mathbf{W} \) and the noisy update \( \Delta \mathbf{W}_{DP} \) is:
\begin{gather}
    \Delta \mathbf{W}_{DP} - \Delta \mathbf{W} 
    = \left(\mathbf{B} + \boldsymbol{\xi}_B\right)\left(\mathbf{A} + \boldsymbol{\xi}_A\right) - \mathbf{B} \mathbf{A}  
    = \boldsymbol{\xi}_B \mathbf{A} + \mathbf{B} \boldsymbol{\xi}_A + \boldsymbol{\xi}_B \boldsymbol{\xi}_A.
\end{gather}
The first-order noise term, \( \boldsymbol{\xi}_B \mathbf{A} + \mathbf{B} \boldsymbol{\xi}_A \), is expected and occurs even in full FT with DP-SGD. 
However, the second-order noise term, \( \boldsymbol{\xi}_B \boldsymbol{\xi}_A \), causes \textbf{noise amplification}, leading to further performance degradation in LoRA-based methods \citep{sun2024improving}.  
This issue is exacerbated in FL, as individual client updates deviate even further from the ideal global update. FFA-LoRA avoids this problem by freezing \( \mathbf{A} \), preventing the introduction of additional noise terms.

\textbf{A Silver Bullet Indeed.}
The bilinear parameterization in LoRA introduces two key challenges: inexact aggregation and noise amplification. 
FedEx-LoRA/FLoRA addresses the inexactness issue by enabling exact aggregation, but at the cost of communication overhead that scales prohibitively with the number of clients. 
FFA-LoRA mitigates inexact aggregation and excessive communication but sacrifices performance, as it operates in a low-rank space and has reduced expressivity.
An ideal method would efficiently learn higher-rank updates while inherently enabling exact aggregation without increasing communication costs. However, any LoRA-based formulation that attempts to resolve these challenges must inevitably trade off expressivity, ultimately compromising performance. 
We prove that LoRA-SB provides an optimal reparameterization of the updates, effectively overcoming all limitations of LoRA in both non-private and privacy-preserving federated settings.

\section{Method}\label{sec:method}

\begin{figure}
    \centering
    \includegraphics[width=0.48\textwidth]{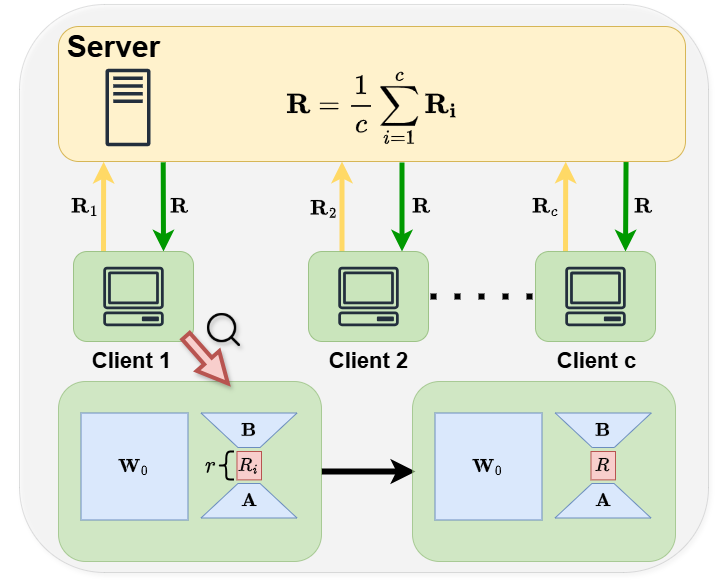}
    \caption{\textbf{Fed-SB}: Our method achieves optimal exact aggregation by averaging only the \( r\times r\) matrices \( \mathbf{R}_i \), significantly reducing communication costs.}
    \label{fig:method}
\end{figure}

\textbf{LoRA-SB for Fine-Tuning.}
LoRA-SB \citep{ponkshe2024initialization} optimally approximates full FT gradients in low-rank spaces and demonstrates that its entire optimization trajectory aligns with the ideal low-rank projection of the full FT path. 
To achieve this, LoRA-SB fixes \( \mathbf{A} \) and \( \mathbf{B} \) while introducing a new trainable adapter \( \mathbf{R} \) of size \( r \times r \).  
Since \( \mathbf{R} \) has rank \( r \), it updates the pre-trained weight while maintaining rank \( r \), making it highly parameter efficient.  
As a result, LoRA-SB consistently outperforms LoRA (and variants) across benchmarks while using 45–90x fewer trainable parameters.

\textbf{Fed-SB: A Silver bullet for (Private) Federated Fine-Tuning.} 
We propose \textbf{Fed-SB}, an extremely communication-efficient and high-performing federated adaptation of LoRA-SB. 
Instead of reparameterizing updates as a low-rank decomposition with learnable adapters, the server distributes frozen adapters \( \mathbf{B} \) and \( \mathbf{A} \), while clients train only a small matrix \( \mathbf{R} \) (Figure \ref{fig:method}). 
This enables exact aggregation, as the global update is simply the average of \( \mathbf{R} \) across clients.  
Formally, given a pre-trained weight \( \mathbf{W}_0 \) and data distributed across \( c \) clients, each client learns updates of the form:  
\begin{align}
    \Delta \mathbf{W}_i = \mathbf{B} \mathbf{R}_i \mathbf{A}.
\end{align}
The server then aggregates the updates by computing the global \( \mathbf{R} \) matrix:  
\begin{align}
    \mathbf{R}^{\text{agg}} = \frac{1}{c} \sum_{i=1}^{c} \mathbf{R}_i,
    \Delta \mathbf{W}^{\text{agg}} = \mathbf{B} \left(\frac{1}{c} \sum_{i=1}^{c} \mathbf{R}_i \right) \mathbf{A}.
\end{align}
We show that \textbf{Fed-SB} effectively resolves all challenges in (private) federated FT while achieving state-of-the-art communication efficiency and performance. 
Table \ref{tab:methods-comparison} highlights the advantages of Fed-SB over other methods. Since Fed-SB fixes the adapter matrices $A$ and $B$ throughout training, their initialization is crucial for effective learning. We adopt the update-based initialization strategy from LoRA-SB, which we detail in Appendix~\ref{app:init}.

\textbf{Fed-SB: Exact Aggregation.}
Since only \( \mathbf{R} \) is trainable, simple averaging of \( \mathbf{R} \) across clients ensures exact aggregation without any updates to any other matrix. 
Further, the linearity of the global update with respect to the client-specific matrices \( \mathbf{R}_i \) guarantees that exact aggregation occurs within rank \( r \), preventing communication costs from scaling with number of clients. 
This is because the server only needs to aggregate and transmit \( \mathbf{R} \), which can be proven by computing the global update \( \Delta \mathbf{W}^{\text{agg}} \):  
\begin{gather}
     \Delta \mathbf{W}^{\text{agg}} = \mathbf{B} \left( \frac{1}{c} \sum_{i=1}^{c} \mathbf{R}_i \right) \mathbf{A},\\
    \quad \Delta \mathbf{W}^{\text{agg}} = \frac{1}{c} \sum_{i=1}^{c} \mathbf{B} \mathbf{R}_i \mathbf{A} =  \frac{1}{c} \sum_{i=1}^{c} \Delta \mathbf{W}_i  . 
\end{gather}
Since the global update is simply the average of the individual updates, the aggregation is exact. 
The key advantage here is that this exact aggregation does not incur additional communication overhead like FedEx-LoRA, nor does it compromise individual update quality like FFA-LoRA.

\textbf{Fed-SB: Privacy.}  
Privacy-preserving FT with Fed-SB has two key advantages:  
1) Fed-SB avoids noise amplification, which is a common issue in LoRA-based methods.  
2) Since Fed-SB inherently requires fewer learnable parameters, the amount of noise added to enforce DP guarantees is significantly lower.

\textbf{Avoids Noise Amplification.}
DP-SGD training in Fed-SB avoids second-order noise terms, as only \( \mathbf{R} \) is trainable. This prevents the introduction of cross terms, thereby eliminating noise amplification. The difference between the updates with and without private training is given by:  
\begin{gather}
     \Delta \mathbf{W}_{DP} - \Delta \mathbf{W} = \mathbf{B}\left(\mathbf{R} + \boldsymbol{\xi}_B\right)\mathbf{A} - \mathbf{B} \mathbf{R} \mathbf{A}
     \implies \Delta \mathbf{W}_{DP} - \Delta \mathbf{W} =   \mathbf{B} \boldsymbol{\xi}_B \mathbf{A} .
\end{gather}
Since the private update remains linear in \( \mathbf{R} \), Fed-SB achieves the same benefits in private settings as FFA-LoRA, while avoiding its limitations.  

\textbf{Fewer Learnable Parameters.}
The noise added to gradients for DP enforcement increases with the number of trainable parameters \citep{bassily2014private, dgsgd, bun2014fingerprinting}, potentially distorting learning and degrading performance.  
Reducing trainable parameters improves DP performance, provided the model retains sufficient task-specific expressivity.  
%LoRA-SB achieves competitive or superior performance to other low-rank adaptation methods while training only an $r \times r$ matrix, making it well-suited for private FT.

\begin{tcolorbox}[colback=cyan!10,colframe=black]
\begin{lemma} \label{lemma:privacy}
    Consider a model with \( d \) learnable parameters trained using DP-SGD. The privacy parameter \( \epsilon \) for \( \delta \)-approximate differential privacy, given \( T \) training steps and a batch size of \( q \), is expressed as:
\begin{align}
    \epsilon = O(q \sqrt{T d \log (1 / \delta)}) = O(\sqrt{d}).
\end{align}
\end{lemma}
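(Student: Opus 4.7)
The plan is to invoke the standard DP-SGD privacy analysis of Abadi et al.\ through the moments accountant (equivalently, R\'enyi DP) and then close the argument with a dimension-dependent choice of the noise multiplier. I would organize the proof into three stages: a per-step Gaussian-plus-subsampling bound, composition across the $T$ steps followed by conversion to $(\epsilon,\delta)$-DP, and finally substitution of a $\sqrt{d}$-calibrated $\sigma$.

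In the first stage I would recall that each DP-SGD iteration clips per-sample gradients to $\ell_2$-norm $C$, giving sensitivity $C$ for the batch sum, and then perturbs by $\mathcal{N}(0,\sigma^2 C^2 \mathbf{I}_d)$. The Gaussian mechanism therefore provides $(\alpha,\alpha/(2\sigma^2))$-R\'enyi DP per step for any order $\alpha>1$, and Poisson subsampling at rate $q$ amplifies this to $O(q^2\alpha/\sigma^2)$ per step in the usual small-$q$ regime.

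In the second stage I would compose across the $T$ iterations: R\'enyi DP composes additively, yielding $(\alpha, O(Tq^2\alpha/\sigma^2))$-RDP overall. Converting to $(\epsilon,\delta)$-DP via $\epsilon = \epsilon_{\mathrm{RDP}}(\alpha) + \log(1/\delta)/(\alpha-1)$ and optimizing over $\alpha$ balances the two terms and gives the familiar moments-accountant bound
\[
    \epsilon \;=\; O\!\left(\frac{q\sqrt{T\log(1/\delta)}}{\sigma}\right),
\]
which up to this point is entirely dimension-free.

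Finally, I would set the noise multiplier to $\sigma = \Theta(1/\sqrt{d})$, motivated by a signal-to-noise balance: the coordinates of a clipped gradient have magnitude at most $C/\sqrt{d}$, while the injected noise has per-coordinate scale $\sigma C$, so any dimension-free $\sigma$ would drown the signal in high dimension. Substituting this $\sigma$ into the display gives $\epsilon = O(q\sqrt{T d \log(1/\delta)}) = O(\sqrt{d})$ at fixed $q,T,\delta$, which is the claim. The main obstacle I foresee is precisely the justification of this last substitution: the privacy calculation itself carries no intrinsic dimension, so the $\sqrt{d}$ factor emerges only when one couples privacy to a target utility level. I would address this by appealing to the lower bounds of Bassily--Smith--Thakurta \citep{bassily2014private} together with the fingerprinting-style arguments of \citep{bun2014fingerprinting}, which together show that any learner tolerating substantially smaller $\sigma$ (hence dimension-free $\epsilon$) must sacrifice utility, so the coordinate-wise scaling used here is canonical rather than arbitrary.
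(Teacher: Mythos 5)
Your proposal follows essentially the same route as the paper's proof: per-step R\'enyi DP for the subsampled Gaussian mechanism, additive composition over the $T$ steps, conversion to $(\epsilon,\delta)$-DP, and finally the substitution $\sigma \propto 1/\sqrt{d}$ to extract the $\sqrt{d}$ dependence. If anything, your handling of the conversion step (optimizing over $\alpha$ to first obtain the dimension-free bound $\epsilon = O(q\sqrt{T\log(1/\delta)}/\sigma)$, and explicitly justifying the $\sqrt{d}$-calibrated noise via the lower bounds of \citep{bassily2014private, bun2014fingerprinting}) is cleaner than the paper's, which instead sets $\alpha$ proportional to $1/\sqrt{d}$ before substituting $\sigma$.
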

\begin{proof}
    See Appendix \ref{app:proof_priv}.
\end{proof}
\end{tcolorbox}

Lemma \ref{lemma:privacy} establishes that reducing the number of learnable parameters enhances privacy guarantees under the same training setup. Specifically, achieving an equivalent level of privacy requires injecting less noise per parameter when fewer parameters are trained.  
Since LoRA-SB optimally approximates full fine-tuning gradients, its updates remain as effective as those in LoRA while benefiting from lower noise per update, resulting in a superior privacy-utility tradeoff.  
More generally, any reparameterization that reduces trainable parameters leads to a smaller accumulated privacy parameter \( \epsilon \), thereby improving performance, provided the reduction does not compromise learning.
%advanced composition case
%$\sigma \geq c_2 \frac{q \sqrt{T \log (1 / \delta)}}{\epsilon}$

%Renyi-DP equivalent (note eps,del to renyo dp is lossy, but with nicer composition)
%$\alpha(\lambda) \leq q^2 \lambda(\lambda+1) /(1-q) \sigma^2+O\left(q^3 / \sigma^3\right)$
%\\

\textbf{Fed-SB: Pushing the Pareto Frontier.}
Fed-SB has significantly less communication costs than other federated FT methods. This is due to two key reasons:  
1) LoRA-SB achieves performance comparable to or better than LoRA while requiring 45-90x fewer trainable parameters.  
2) Fed-SB aggregates only the \( r \times r \) trainable matrix \( \mathbf{R} \), ensuring exact aggregation without additional communication overhead.  
This allows Fed-SB to leverage higher-rank updates without increasing communication costs. 
LoRA-SB typically operates at ranks 2-4x higher than LoRA, enabling Fed-SB to capture richer updates. 
Retaining high-rank information is crucial in FL \citep{mahla2025exploringgradientsubspacesaddressing} and a key factor in the superior performance of FedEx-LoRA/FLoRA over FFA-LoRA/Fed-IT beyond just aggregation exactness.

While our main focus is on the rank-homogeneous setting (where all clients use the same adapter rank), \textbf{we also extend Fed-SB to support rank-heterogeneous clients}, where each client trains with its own local rank budget. 
Additional details and results are provided in Table \ref{tab:hom-vs-het} (Appendix~\ref{app:rank-hetero}), where we show that the rank-heterogeneous setup achieves performance comparable to the homogeneous rank settings.

\section{Experiments \& Results}\label{sec:experiments}

\begin{table}[!h]
\centering
\caption{Federated fine-tuning of Llama-3.2 3B across eight commonsense reasoning datasets. \# Comm. denotes the number of parameters communicated per round (in M). Best results are in \textbf{bold}.
}
\setlength{\tabcolsep}{1.5pt}
\small
\begin{tabular}{lcc|ccccccccc}
\toprule
\multirow{2}{*}{\bf Method} & \multirow{2}{*}{\bf Rank} & \multirow{2}{*}{\bf \# Comm. ($\downarrow$)} & \multicolumn{9}{c}{\textbf{Accuracy ($\uparrow$)}} \\
 &  &  & \textbf{BoolQ} & \textbf{PIQA} & \textbf{SIQA} & \textbf{HellaS.} & \textbf{WinoG.} & \textbf{ARC-e} & \textbf{ARC-c} & \textbf{OBQA} & \textbf{Avg.} \\
\midrule
FedIT & $32$ & $48.63$ & $62.99$ & $81.50$ & $73.13$ & $76.83$ & $71.51$ & $84.89$ & $70.65$ & $70.62$ & $74.02$ \\
FFA-LoRA   & $32$ & $24.31$ & $62.87$ & $80.03$ & $68.53$ & $70.02$ & $65.56$ & $82.95$ & $66.38$ & $66.85$ & $70.40$ \\
FedEx-LoRA & $32$ & $243.15$ & $65.05$ & $82.81$ & $74.67$ & $81.84$ & $76.01$ & $86.32$ & $71.42$ & $73.81$ & $76.49$ \\
FLoRA & $32$ & $243.15$ & $65.05$ & $82.81$ & $74.67$ & $81.84$ & $76.01$ & $86.32$ & $71.42$ & $73.81$ & $76.49$ \\
\rowcolor{cyan!10} 
Fed-SB & $120$ & $2.83$  & $64.86$ & $81.66$ & $74.87$ & $81.67$ & $75.22$ & $86.03$ & $70.56$ & $72.25$ & $75.89$ \\
\rowcolor{cyan!10} 
Fed-SB & $160$ & $5.02$  & $65.57$ & $82.37$ & $76.15$ & $84.10$ & $77.98$ & $86.62$ & $72.10$ & $73.63$ & $77.32$ \\
\rowcolor{cyan!10} 
Fed-SB & $200$ & $7.85$  & $\mathbf{66.66}$ & $\mathbf{83.79}$ & $\mathbf{77.22}$ & $\mathbf{85.42}$ & $\mathbf{79.56}$ & $\mathbf{87.46}$ & $\mathbf{72.53}$ & $\mathbf{76.02}$ & $\mathbf{78.58}$ \\
\bottomrule
\end{tabular}
\label{tab:commonsense}
\end{table}

\begin{table}[!h]
\centering
\caption{Federated fine‑tuning of Llama‑3.2 3B across eight commonsense reasoning datasets, in a \textbf{highly data‑heterogeneous} setting, where each client is trained on a distinct dataset. \# Comm. denotes the number of parameters communicated per round (in M). Best results are in \textbf{bold}.}
\setlength{\tabcolsep}{1.5pt}
\small
\begin{tabular}{lcc|ccccccccc}
\toprule
\multirow{2}{*}{\bf Method} & \multirow{2}{*}{\bf Rank} & \multirow{2}{*}{\bf \# Comm. ($\downarrow$)} 
  & \multicolumn{9}{c}{\textbf{Accuracy ($\uparrow$)}} \\
 & & & \textbf{BoolQ} & \textbf{PIQA} & \textbf{SIQA} & \textbf{HellaS.} 
   & \textbf{WinoG.} & \textbf{ARC-e} & \textbf{ARC-c} & \textbf{OBQA} & \textbf{Avg.} \\
\midrule
FedIT      & 32  & 48.63   & 60.89 & 78.22 & 69.92 & 73.18 & 67.88 & 81.21 & 67.04 & 66.91 & 70.80 \\
FFA-LoRA   & 32  & 24.31   & 60.73 & 76.91 & 65.37 & 65.18 & 61.89 & 79.41 & 62.92 & 63.12 & 67.17 \\
FedEx-LoRA & 32  & 243.15  & 62.55 & 79.36 & 71.41 & 78.12 & 72.45 & 82.89 & 67.88 & 70.25 & 73.13 \\
FLoRA      & 32  & 243.15  & 62.55 & 79.36 & 71.41 & 78.12 & 72.45 & 82.89 & 67.88 & 70.25 & 73.13 \\
\rowcolor{cyan!10}
Fed-SB     & 120 & 2.83    & 61.41 & 78.13 & 71.02 & 78.24 & 71.78 & 82.45 & 67.12 & 68.83 & 72.65 \\
\rowcolor{cyan!10}
Fed-SB     & 160 & 5.02    & 62.34 & 79.05 & 72.39 & 80.52 & 74.67 & 83.18 & 68.64 & 70.12 & 73.98 \\
\rowcolor{cyan!10}
Fed-SB     & 200 & 7.85    & \textbf{63.28} & \textbf{80.34} & \textbf{73.56} & \textbf{82.07} 
                           & \textbf{76.01} & \textbf{84.01} & \textbf{69.02} & \textbf{72.46} & \textbf{75.21} \\
\bottomrule
\end{tabular}
\label{tab:commonsense-het}
\end{table}

\begin{table}[!h]
\centering
\caption{Federated fine-tuning of Mistral-7B and Gemma-2 9B on GSM8K and MATH. \# Comm. denotes the number of parameters communicated per round (in M). Best results are in \textbf{bold}.
}
\setlength{\tabcolsep}{4.5pt}
\small
\begin{tabular}{llcc|cc}
\toprule
\multirow{2}{*}{\textbf{Model}} & \multirow{2}{*}{\textbf{Method}} & \multirow{2}{*}{\textbf{Rank}} & \multirow{2}{*}{\textbf{\# Comm. ($\downarrow$)}} & \multicolumn{2}{c}{\textbf{Accuracy ($\uparrow$)}} \\
\cmidrule{5-6}
& & & & \textbf{GSM8K} & \textbf{MATH} \\
\midrule
\multirow{7}{*}{Mistral-7B} 
    & FedIT       & $32$  & $83.88$   & $52.91$ & $12.26$ \\
    & FFA-LoRA    & $32$  & $41.94$   & $53.67$ & $12.46$ \\
    & FedEx-LoRA  & $32$  & $2097.34$ & $54.28$ & $12.92$ \\
    & FLoRA       & $32$  & $2097.34$ & $54.28$ & $12.92$ \\
    & \cellcolor{cyan!10}Fed-SB       & \cellcolor{cyan!10}$120$  & \cellcolor{cyan!10}$3.22$    & \cellcolor{cyan!10}$54.44$ & \cellcolor{cyan!10}$\mathbf{14.06}$ \\
    & \cellcolor{cyan!10}Fed-SB       & \cellcolor{cyan!10}$160$  & \cellcolor{cyan!10}$5.73$    & \cellcolor{cyan!10}$54.81$ & \cellcolor{cyan!10}$13.74$ \\
    & \cellcolor{cyan!10}Fed-SB       & \cellcolor{cyan!10}$200$  & \cellcolor{cyan!10}$8.96$    & \cellcolor{cyan!10}$\mathbf{56.18}$ & \cellcolor{cyan!10}$13.76$ \\
\midrule
\multirow{7}{*}{Gemma-2 9B} 
    & FedIT       & $32$  & $108.04$  & $74.22$ & $36.30$ \\
    & FFA-LoRA    & $32$  & $54.02$   & $75.06$ & $35.44$ \\
    & FedEx-LoRA  & $32$  & $2701.12$ & $74.68$ & $36.70$ \\
    & FLoRA       & $32$  & $2701.12$ & $74.68$ & $36.70$ \\
    & \cellcolor{cyan!10}Fed-SB       & \cellcolor{cyan!10}$120$  & \cellcolor{cyan!10}$4.23$    & \cellcolor{cyan!10}$74.75$ & \cellcolor{cyan!10}$36.36$ \\
    & \cellcolor{cyan!10}Fed-SB       & \cellcolor{cyan!10}$160$  & \cellcolor{cyan!10}$7.53$    & \cellcolor{cyan!10}$76.88$ & \cellcolor{cyan!10}$36.94$ \\
    & \cellcolor{cyan!10}Fed-SB       & \cellcolor{cyan!10}$200$  & \cellcolor{cyan!10}$11.76$   & \cellcolor{cyan!10}$\mathbf{77.03}$ & \cellcolor{cyan!10}$\mathbf{37.56}$ \\
\bottomrule
\end{tabular}
\label{tab:arithmetic}
\end{table}

\textbf{Overview.}
We evaluate across three diverse NLP benchmarks, covering models that span from BERT-base (110M) to Gemma-2 (9B), thereby encompassing both masked and autoregressive architectures. 
Specifically, we fine-tune Mistral-7B \citep{mistral7b}, Gemma-2 9B \citep{gemma2}, Llama-3.2 3B \citep{llama3}, and BERT-base \citep{devlin2018bert}. 
% All implementations are built using PyTorch \citep{paszke2019pytorch}, utilizing the HuggingFace Transformers library \citep{wolf2020transformers}. 
Our experiments consider both performance and communication efficiency.
Detailed experimental and dataset specifications are provided in Appendix \ref{app:hyperparams} and \ref{app:datasets}, respectively. 
For federated data distribution, we adopt a standard protocol where client datasets are randomly sampled, following established practice in FL \citep{sun2024improving, he2020fedml, lai2022fedscale}.
We conduct experiments on a single NVIDIA A6000 GPU (48 GB) and report the average results from three independent runs.

\textbf{Baselines.}
We evaluate against several SOTA federated FT approaches described previously, considering both private and non-private settings. 
Specifically, we compare it with \textbf{FedIT}, \textbf{FedEx-LoRA}, \textbf{FLoRA}, and \textbf{FFA-LoRA}. 
Where applicable, we also include comparisons with standard \textbf{LoRA} \citep{lora}.
% \textbf{FedIT} \citep{FedIT} applies standard federated averaging (FedAvg, \citet{mcmahan2017communication}) to LoRA.  
% \textbf{FedEx-LoRA} \citep{singhal2024exact} and \textbf{FLoRA} \citep{wang2024flora} address inexact aggregation in LoRA by introducing adjustments to the error residual term, but differ in their re-initialization strategy at the end of each communication round.  
% \textbf{FFA-LoRA} \citep{sun2024improving} ensures exact aggregation by freezing $\mathbf{A}$ matrices while training only $\mathbf{B}$, which comes at the expense of losing the benefits of jointly optimizing $\mathbf{A}$.  

\subsection{Instruction Tuning}
\textbf{Details.} 
We conduct experiments in the \textbf{federated non-private} setting across two reasoning tasks: commonsense reasoning and arithmetic reasoning. 
For \textbf{commonsense reasoning}, we fine-tune Llama-3.2 3B on \textsc{CommonSense170K}, a dataset aggregating eight commonsense reasoning corpora \citep{cr-dataset}, and evaluate its effectiveness across all constituent datasets. The experiments are performed in a cross-silo federated learning setup involving $5$ clients. 

We also evaluate Fed-SB \textbf{under extreme data heterogeneity}. Instead of randomly sampling examples for each client, we assign each constituent dataset to a distinct client, resulting in a \textbf{highly non-IID} 8-client setup. Each client trains on a distinct distribution, with varying dataset sizes.

For \textbf{arithmetic reasoning}, we fine-tune Mistral-7B \citep{mistral7b} and Gemma-2 9B \citep{gemma2} on 20K samples from the MetaMathQA dataset \citep{metamathqa} and assess their performance on the GSM8K \citep{gsm8k} and MATH \citep{math} benchmarks. In this setup, we distribute the federated training across $25$ clients.   
In both cases, we apply LoRA modules to the key, query, value, attention output, and all fully connected weights.

\textbf{Results} (Tables \ref{tab:commonsense}, \ref{tab:commonsense-het}, \ref{tab:arithmetic}).
%Results for commonsense and arithmetic reasoning are presented in Tables \ref{tab:commonsense} and \ref{tab:arithmetic}, respectively. 
Our method achieves \textbf{state-of-the-art performance}, outperforming all previous baselines in both accuracy and communication efficiency \textbf{across all models and benchmarks}.
Figure \ref{fig:results-it} further illustrates this significant improvement.
Additional results on the effect of varying rank are reported in Table~\ref{tab:fed-sb-rank-ablation} in Appendix~\ref{app:vary-rank}.

\begin{table}[!h]
\centering
\caption{Centralized (Cent.) private fine-tuning of BERT-base on SNLI for varying values of $\epsilon$. A smaller $\epsilon$ indicates a stricter privacy budget. \# Params. denotes the number of trainable parameters (in K). Best results are in \textbf{bold}.
}
\setlength{\tabcolsep}{4.5pt}
\small
\begin{tabular}{lcc|ccccc}
\toprule
\multirow{2}{*}{\bf Method} & \multirow{2}{*}{\bf Rank} & \multirow{2}{*}{\bf \# Params. ($\downarrow$)} & \multicolumn{5}{c}{\textbf{Accuracy ($\uparrow$)}} \\
 &  &  & $\mathbf{\epsilon=1}$ & $\mathbf{\epsilon=3}$ & $\mathbf{\epsilon=5}$ & $\mathbf{\epsilon=7.5}$ & $\mathbf{\epsilon=10}$ \\
\midrule
Cent. LoRA & $32$ & $1181.96$ & $66.49$ & $67.79$ & $68.17$ & $70.78$ & $70.81$  \\
Cent. FFA-LoRA   & $32$ & $592.13$ &  $74.40$ & $75.02$ & $75.02$ & $76.14$ & $76.60$  \\
\rowcolor{cyan!10} 
Cent. Fed-SB & $32$ & $26.88$  & $73.99$ & $75.09$ & $74.45$ & $77.01$ & $76.24$  \\
\rowcolor{cyan!10} 
Cent. Fed-SB & $48$ & $57.59$  & $\mathbf{75.98}$ & $75.70$ & $76.58$ & $76.77$ & $77.96$ \\
\rowcolor{cyan!10} 
Cent. Fed-SB & $64$ & $100.61$ & $75.81$  & $\mathbf{77.07}$& $\mathbf{77.59}$ & $\mathbf{78.75}$ & $\mathbf{78.08}$\\
\bottomrule
\end{tabular}
\label{tab:dp-central}
\end{table}

\begin{table}[!h]
\centering
\caption{Federated private fine-tuning of BERT-base on SNLI for varying values of $\epsilon$. A smaller $\epsilon$ indicates a stricter privacy budget. \# Comm. denotes the number of parameters communicated per round (in K). Best results are in \textbf{bold}.
}
\setlength{\tabcolsep}{4.5pt}
\small
\begin{tabular}{lcc|ccccc}
\toprule
\multirow{2}{*}{\bf Method} & \multirow{2}{*}{\bf Rank} & \multirow{2}{*}{\bf \# Comm. ($\downarrow$)} & \multicolumn{5}{c}{\textbf{Accuracy ($\uparrow$)}} \\
  &  &  & $\mathbf{\epsilon=1}$ & $\mathbf{\epsilon=3}$ & $\mathbf{\epsilon=5}$ & $\mathbf{\epsilon=7.5}$ & $\mathbf{\epsilon=10}$ \\
\midrule
FedIT & $32$ & $1181.96$ & $49.57$ & $51.29$ & $48.53$ &  $55.63$ & $60.96$  \\
FFA-LoRA   & $32$ & $592.13$ & $70.11$ & $71.49$ & $72.69$ & $73.27$ & $74.02$  \\
FedEx-LoRA   & $32$ & $3541.26$ & $67.38$& $69.68$ & $72.92$& $71.89$ & $74.33$  \\
FLoRA   & $32$ & $3541.26$ & $67.38$ & $69.68$ & $72.92$& $71.89$ & $74.33$  \\
\rowcolor{cyan!10} 
Fed-SB & $32$ & $26.88$  & $70.33$ & $72.68$ & $73.57$& $73.62$ & $73.85$  \\
\rowcolor{cyan!10} 
Fed-SB & $48$ & $57.59$ & $73.7$ & $74.74$ & $73.66$& $74.75$ & $75.02$ \\
\rowcolor{cyan!10} 
Fed-SB & $64$ & $100.61$ & $\mathbf{73.83}$ & $\mathbf{74.88}$& $\mathbf{76.27}$ & $\mathbf{75.75}$ & $\mathbf{75.86}$\\
\bottomrule
\end{tabular}
\label{tab:fed-central}
\end{table}

\textbf{Commonsense Reasoning} (Table \ref{tab:commonsense}). 
Fed-SB ($r=200$) achieves an average improvement of 4.56\% over FedIT while requiring \textbf{6×} lower communication cost. 
Additionally, Fed-SB ($r=200$) surpasses the previous SOTA performance methods FedEx-LoRA/FLoRA by 2.09\%, while reducing communication cost by an impressive \textbf{31×}. 
Notably, while the communication cost of FedEx-LoRA/FLoRA scales linearly with the number of clients, our method maintains a constant, client-independent communication cost. 
These results are obtained with just $5$ clients, implying that the full extent of our method’s communication efficiency is not fully depicted here. 
As the number of clients increases, the relative advantage of Fed-SB over existing methods grows even further.

\textbf{Highly Data-Heterogenous Setting} (Table \ref{tab:commonsense-het}). Fed-SB significantly outperforms all other methods even in this highly non-IID setting. Specifically, Fed-SB ($r=200$) surpasses the previous state-of-the-art methods, FedEx-LoRA and FLoRA, by 2.08\% in accuracy while achieving a remarkable \textbf{31×} reduction in communication cost. 
%These results demonstrate that Fed-SB is the most effective method across a wide range of scenarios.

\textbf{Arithmetic Reasoning} (Table \ref{tab:arithmetic}). 
For Mistral-7B, Fed-SB ($r=200$) outperforms FedEx-LoRA/FLoRA on GSM8K by 1.90\%, while achieving an impressive \textbf{234×} reduction in communication cost. 
Additionally, Fed-SB ($r=200$) surpasses FFA-LoRA on GSM8K by 2.51\%, with approximately \textbf{5×} lower communication cost.
For Gemma-2 9B, Fed-SB ($r=200$) outperforms FedEx-LoRA/FLoRA on MATH by 0.86\%, while reducing communication cost by \textbf{230×}.

\subsection{(Federated) Private Fine-Tuning}

\textbf{Details.}
We fine-tune BERT-base \citep{devlin2018bert} on SNLI \citep{bowman2015snli}, a standard benchmark for natural language inference. 
Following LoRA\citep{lora}, we apply LoRA modules only to the self-attention layers.  
Our evaluation considers two DP settings: a \textbf{centralized private} setup and a \textbf{federated private} setup. 
To enforce DP guarantees during training, we use the Opacus library \citep{yousefpour2021opacus} with the DP-SGD optimizer \citep{dgsgd}.  
In the federated setting, training is conducted in a cross-silo setup with $3$ clients.
% In the centralized setup, we fine-tune for $3$ epochs. In the federated setting, training is conducted in a cross-silo setup with $3$ clients, where each client performs $6$ local epochs before a single aggregation round.
We conduct experiments across a range of privacy budgets, varying $\epsilon$ from $1$ to $10$. 

\textbf{Results} (Tables \ref{tab:dp-central}, \ref{tab:fed-central}).
%Results for centralized private and federated private FT are presented in Tables \ref{tab:dp-central} and \ref{tab:fed-central}, respectively. 
Fed-SB consistently outperforms all prior baselines in \textbf{both accuracy and communication/parameter efficiency} across \textbf{all privacy budgets} in both settings. 
Figures \ref{fig:plots-dp-vary}, \ref{fig:plots-dp-central-eps}, and \ref{fig:plots-dp-fed-eps} further illustrate this significant improvement.
Further experiments analyzing the impact of rank variation are given in Table~\ref{tab:rank-epsilon} (Appendix~\ref{app:vary-rank}).

\textbf{Centralized Private} (Table \ref{tab:dp-central}).
%We evaluate our method in the centralized setting against other methods. 
Fed-SB showcases significant improvement over other methods while using only a fraction of the parameters, across all $\epsilon$ values. 
For instance, at $\epsilon=3$, Fed-SB ($r=64$) surpasses centralized LoRA and centralized FFA-LoRA by 9.28\% and 2.05\%, respectively, while using $\approx$ \textbf{12x} and \textbf{6x} fewer parameters.

\textbf{Federated Private} (Table \ref{tab:fed-central}).
Fed-SB consistently outperforms all previous methods across all values of $\epsilon$, while significantly reducing communication costs. 
For instance, at $\epsilon=1$, Fed-SB ($r=64$) outperforms FedIT, FedEx-LoRA/FLoRA, and FFA-LoRA by 24.26\%, 6.48\%, and 2.72\%, respectively, while reducing communication cost by approximately \textbf{12x}, \textbf{35x}, and \textbf{6x}.
FedIT performs significantly worse in the federated private setting compared to the federated non-private setting. 
We hypothesize that this is due to increased deviation in updates under DP constraints and added noise, leading to greater divergence from the ideal.

\subsection{Memory and Training Time} \label{subsec:eff}

\textbf{Memory.} 
Fed-SB needs \textbf{lower per-client training memory} relative to all other baselines by substantially reducing the number of trainable parameters. Notably, this advantage holds even when Fed-SB is trained with a higher rank ($r=200$), where it still requires less memory than competing methods at a lower rank ($r=32$).
We note that the peak memory usage of Fed-SB never exceeds that of any other federated LoRA-based baseline.
Detailed analysis is provided in Table \ref{tab:memory-comparison} (Appendix \ref{app:mem_time}).

\textbf{Training Time.} 
Fed-SB introduces a negligible training time overhead ($\approx 2\%$) relative to other methods, attributable to its initialization step. 
We benchmark this overhead in Table \ref{tab:time-overhead} (Appendix \ref{app:mem_time}).

\section{Conclusion} \label{conclusion}
Existing LoRA-based federated FT methods either suffer from suboptimal updates or incur prohibitively high communication costs.  
We introduce Fed-SB, a federated adaptation of LoRA-SB that ensures exact aggregation while maintaining high communication efficiency.  
By training only a small \( r \times r \) matrix and leveraging direct averaging, Fed-SB eliminates high-rank update costs and achieves communication efficiency independent of the number of clients.  
Fed-SB is particularly well-suited for private FT, as its linearity prevents noise amplification, and its reduced parameter count minimizes noise required for enforcing DP guarantees. 
It consistently achieves a \textbf{new state-of-the-art} across all models and tasks while reducing communication costs by up to \textbf{230x}.  
These advantages establish Fed-SB as an efficient and scalable solution for (private) federated FT.

% For future work, we plan to analyze the geometry of the \( R \) matrices, as it is important for interpretability and may provide insights into the learning process.  
% We also aim to study how the \( R \) matrices change during training, which is key to understanding the fine-tuning process and characterizing its properties.

%Furthermore, we do not evaluate Fed-SB in scenarios with extreme data heterogeneity or significant class imbalances.

%\input{rep_statement}
\section{Acknowledgements}
This work was supported by funding from Mohamed bin Zayed University of Artificial Intelligence (MBZUAI) and an ADIA Lab fellowship.

\bibliography{custom}
\bibliographystyle{plain}

\clearpage
\appendix

% make an unnumbered “Appendix” heading as the parent
\part*{Appendix}
\addcontentsline{toc}{part}{Appendix} % if you want it in the main ToC

% now the local ToC of all \section{…} below
\etocsettocdepth{subsection}
\localtableofcontents
% ————————————————

\section{Proof of Lemma \ref{lemma:privacy} } \label{app:proof_priv}
\begin{tcolorbox}[colback=cyan!10,colframe=black]
\begin{lemma*}
    Consider a model with \( d \) learnable parameters trained using DP-SGD. The privacy parameter \( \epsilon \) for \( \delta \)-approximate differential privacy, given \( T \) training steps and a batch size of \( q \), is expressed as:
\begin{align}
    \epsilon = O(q \sqrt{T d \log (1 / \delta)}) = O(\sqrt{d}).
\end{align}
\end{lemma*}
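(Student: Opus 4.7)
The plan is to chain together three standard tools already cited in the paper --- the Gaussian mechanism, privacy amplification by subsampling, and moments-accountant composition --- and then expose the dimension dependence in the final step, following the DP-SGD analysis of \cite{dgsgd} together with R\'enyi DP accounting \cite{mironov2017renyi}.

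First I would analyze a single DP-SGD iteration. Per-sample gradients are clipped to $L_2$-norm $C$ and then perturbed by $\mathcal{N}(0,\sigma^2 C^2 I_d)$. The Gaussian mechanism gives per-step $(\epsilon_0,\delta_0)$-DP with $\epsilon_0 = \Theta(\sqrt{\log(1/\delta_0)}/\sigma)$. Because each minibatch is obtained by Poisson subsampling at rate $q$, amplification by subsampling shrinks this to $O(q\epsilon_0)$ per step. Composing $T$ such steps with the moments accountant (strictly tighter than strong composition) yields the standard subsampled Gaussian bound
\[
\epsilon \;=\; O\!\left( q\, \sigma^{-1}\sqrt{T\log(1/\delta)} \right).
\]

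The dimension dependence enters through the effective sensitivity of the $d$-dimensional gradient. Under the standard regime where individual gradient coordinates are $O(1)$, the clipping radius must be $C = \Theta(\sqrt{d})$ to avoid systematically biasing the gradient direction; equivalently, to preserve a fixed per-coordinate signal-to-noise ratio, the noise multiplier $\sigma$, when measured in dimension-normalized units, must scale as $1/\sqrt{d}$. Substituting back gives
\[
\epsilon \;=\; O\!\left( q\sqrt{T d \log(1/\delta)} \right),
\]
and treating $q$, $T$, and $\delta$ as fixed collapses this to $\epsilon = O(\sqrt{d})$.

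The main obstacle I anticipate is justifying the $\sqrt{d}$ factor cleanly, since the vanilla moments-accountant bound is nominally dimension-free once $\sigma$ is expressed purely as a noise multiplier of $C$. The content of the lemma is really a utility-aware reformulation: to stop the injected noise from overwhelming a $d$-dimensional gradient, either $\sigma$ must shrink by $\sqrt{d}$ or $\epsilon$ must absorb it. Once this normalization convention is fixed, the remaining manipulations are routine inequalities in the moments accountant and require no new ideas beyond those already encapsulated by \cite{dgsgd,moments,mironov2017renyi}.
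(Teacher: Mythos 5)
Your proposal follows essentially the same route as the paper's proof: per-step Gaussian mechanism, amplification by subsampling at rate $q$, composition over $T$ steps via the moments accountant / R\'enyi-DP to get $\epsilon = O\bigl(q\sigma^{-1}\sqrt{T\log(1/\delta)}\bigr)$, and then the substitution $\sigma \propto 1/\sqrt{d}$ justified by the $\ell_2$-norm of a $d$-dimensional gradient scaling as $\sqrt{d}$. If anything, your explicit acknowledgment that the vanilla accountant bound is dimension-free and that the $\sqrt{d}$ enters only through a utility-preserving normalization of $\sigma$ is stated more carefully than in the paper, which asserts $\sigma \propto 1/\sqrt{d}$ with minimal justification.
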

\end{tcolorbox}

\begin{proof}
    The following result \cite{dgsgd} describes the relationship between noise variance, privacy parameters, number of optimization steps, batch size, and sample size in DP-SGD.

\begin{theorem*}
    There exist constants \( c_1 \) and \( c_2 \) such that, given the sampling probability \( q = L / N \) and the number of optimization steps \( T \), for any \( \epsilon < c_1 q^2 T \), DP-SGD is \( (\epsilon, \delta) \)-differentially private for any \( \delta > 0 \) if the noise scale satisfies:
    \begin{align}
        \sigma \geq c_2 \frac{q \sqrt{T \log (1 / \delta)}}{\epsilon}.
    \end{align}
\end{theorem*}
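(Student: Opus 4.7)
The plan is to establish the lemma in two coordinated stages: first, actually derive the $(\epsilon,\delta)$-DP guarantee in terms of the per-step noise scale $\sigma$ via the moments-accountant / R\'enyi DP pipeline for DP-SGD (this is the content of the theorem the authors cite, and I would sketch rather than blindly invoke it); second, substitute the canonical dimension-dependent calibration of $\sigma$ that exposes the $\sqrt{d}$ factor and yields the stated bound.

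For the first stage, each DP-SGD step applies the Gaussian mechanism to a per-sample-clipped mini-batch gradient sum of $\ell_2$-sensitivity $C$, perturbed by $\mathcal{N}(0,\sigma^2 C^2 I_d)$. A direct computation on Gaussian R\'enyi divergences gives an $\alpha$-R\'enyi DP guarantee of order $\alpha/(2\sigma^2)$ per step. Poisson subsampling at rate $q$ amplifies this to order $q^2\alpha/\sigma^2$ in the $\alpha q \lesssim 1$ regime (via the standard subsampled-Gaussian R\'enyi bound). Linear composition over $T$ steps, followed by the conversion from $\alpha$-R\'enyi DP to approximate DP and optimization over $\alpha$, yields
$$\epsilon \;=\; O\!\left(\frac{q\sqrt{T\log(1/\delta)}}{\sigma}\right),$$
which is exactly the stated theorem and establishes $(\epsilon,\delta)$-DP for any $\sigma$ meeting the bound.

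The second stage brings in $d$. The injected noise $\xi \sim \mathcal{N}(0,\sigma^2 C^2 I_d)$ has $\|\xi\|_2^2/(\sigma C)^2$ distributed as a chi-squared with $d$ degrees of freedom, so $\|\xi\|_2 = \Theta(\sigma C\sqrt{d})$ with high probability. For the noisy update to remain informative against a gradient of norm at most $C$, the canonical operating point is $\sigma = \Theta(1/\sqrt{d})$. Substituting gives
$$\epsilon \;=\; O\!\left(q\sqrt{T\log(1/\delta)}\cdot\sqrt{d}\right) \;=\; O\!\left(q\sqrt{Td\log(1/\delta)}\right),$$
which, treating $q$, $T$, and $\delta$ as fixed, collapses to $\epsilon = O(\sqrt{d})$, as claimed.

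The main obstacle is the second stage: the privacy accounting itself is intrinsically dimension-free (the Gaussian mechanism's sensitivity depends on the clipping threshold $C$, not on $d$), so the $\sqrt{d}$ factor does not emerge from privacy at all but from a \emph{utility} constraint that the per-step signal-to-noise ratio must not degrade with dimension. I would close this gap explicitly by including the short chi-squared concentration argument above and demanding $\sigma C\sqrt{d} = O(C)$, which pins $\sigma$ at $\Theta(1/\sqrt{d})$; alternatively I would cite the standard DP-SGD utility calibration from Bassily--Smith--Thakurta and Abadi et al.\ that fixes precisely this scaling. A secondary subtlety is the sub-sampling amplification step, which requires $\alpha q \lesssim 1$ and $T \gtrsim 1/q^2$; I would verify these hold in the parameter regime of interest before invoking the amplification bound.
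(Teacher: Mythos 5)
Your proposal is correct, and its first stage is essentially the same machinery the paper relies on: per-step Gaussian R\'enyi DP of order $\alpha/(2\sigma^2)$, subsampling amplification to $O(q^2\alpha/\sigma^2)$, linear composition over $T$ steps, and an RDP-to-DP conversion optimized over $\alpha$. The structural difference is that the paper never proves the quoted theorem at all --- it imports it verbatim from the DP-SGD paper \cite{dgsgd} as a black box, and then runs this same RDP pipeline only to push onward to the $O(\sqrt{d})$ claim of Lemma \ref{lemma:privacy}; you instead derive the theorem itself from the pipeline, which is more self-contained (and is how the cited result is proved in its source). Your second stage addresses the enclosing lemma rather than the quoted statement, and there your treatment is actually cleaner than the paper's: you correctly note that the privacy accounting is dimension-free and that $\sqrt{d}$ enters only through the utility calibration $\sigma = \Theta(1/\sqrt{d})$, which you justify via chi-squared concentration of the noise norm. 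The paper's corresponding step --- ``setting $\alpha$ proportional to $1/\sqrt{d}$, given that the $\ell_2$-norm of the gradient scales as $\sqrt{d}$'' --- is incoherent as written, since $\alpha>1$ is required for RDP and $\alpha$ is a free parameter to be optimized, not something pinned by gradient norms; the paper then substitutes $\sigma \propto 1/\sqrt{d}$ without justification, which is exactly the gap your SNR argument fills. Your closing caveat about the amplification regime ($\alpha q \lesssim 1$) is also apt: that restriction is precisely where the hypothesis $\epsilon < c_1 q^2 T$ in the statement comes from.
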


Each DP-SGD step introduces noise following \( \mathcal{N}\left(0, \sigma^2 C^2 \mathbf{I}_d\right) \) and satisfies \( (\alpha, \alpha / (2 \sigma^2)) \)-RDP (Rényi DP) for the Gaussian mechanism. For a function with \( \ell_2 \)-sensitivity \( \Delta_2 \), the Gaussian mechanism satisfies \( (\alpha, \epsilon) \)-RDP with:
\begin{align}
    \epsilon(\alpha) = \frac{\alpha \Delta_2^2}{2 \sigma_{\text{noise}}^2}.
\end{align}
Since DP-SGD has \( \Delta_2 = C \) and \( \sigma_{\text{noise}} = \sigma C \), applying privacy amplification due to sampling probability \( q \) results in each step satisfying \( (\alpha, \gamma) \)-RDP, where, for small \( q \):
\begin{align}
    \gamma = O\left(\frac{q^2 \alpha}{\sigma^2}\right).
\end{align}
Using composition over \( T \) steps, the total RDP privacy parameter becomes:
\begin{align}
    \gamma_{\text{total}} = O\left(\frac{q^2 T \alpha}{\sigma^2}\right).
\end{align}
Converting this RDP bound back to \( (\epsilon, \delta) \)-DP and setting \( \alpha \) proportional to \( 1 / \sqrt{d} \), given that the \( \ell_2 \)-norm of the gradient scales as \( \sqrt{d} \), we obtain:
\begin{align}
    \epsilon = O\left(\frac{q^2 T \alpha}{\sigma^2} + \frac{\log (1 / \delta)}{\alpha - 1}\right).
\end{align}
Substituting \( \sigma \propto 1 / \sqrt{d} \), we derive:
\begin{align}
    \epsilon = O(q \sqrt{T d \log (1 / \delta)}) = O(\sqrt{d}).
\end{align}
\end{proof}

\section{Related Work}
\textbf{Parameter-Efficient Fine-Tuning (PEFT).}  
LoRA \citep{lora} has become ubiquitous for fine-tuning LLMs \citep{scaling_llm} by modeling weight updates as product of low-rank matrices. 
Several variants have been proposed to improve efficiency, stability, and adaptability.  
QLoRA \citep{qlora} enables efficient fine-tuning through quantization strategies, reducing memory usage while maintaining performance.  
AdaLoRA \citep{adalora} dynamically allocates a layer-specific rank budget by assigning importance scores to individual weight matrices.  
LoRA-XS \citep{bałazy2024loraxslowrankadaptationextremely} further reduces trainable parameters by inserting a trainable matrix between frozen LoRA matrices.  
VeRA \citep{kopiczko2024veravectorbasedrandommatrix} enhances parameter efficiency by learning shared adapters across layers.  
DoRA \citep{liu2024doraweightdecomposedlowrankadaptation} decomposes the pre-trained matrix into two parts—\textit{magnitude} and \textit{direction}—and applies LoRA modules only to the \textit{direction} component.  
PiSSA \citep{meng2024pissaprincipalsingularvalues} improves adaptation by initializing adapters using the singular value decomposition (SVD) of pre-trained weights.  
rsLoRA \citep{rslora} introduces a rank-scaling factor to stabilize learning.  
LoRA-SB \citep{ponkshe2024initialization} provably approximates gradients optimally in low-rank spaces, achieving superior performance with significantly higher parameter efficiency.
\\

\textbf{Federated Fine-Tuning.}
Federated Learning (FL) consists of a centralized global model and multiple clients, each with its own local dataset and computational capacity. 
The global model is updated by aggregating client updates \citep{kairouz2021advances}.  
FedBERT \citep{tian2022fedbert} focuses on federated pre-training, while other methods work on federated fine-tuning \citep{zhang2022federated, kuang2024federatedscope, babakniya2023slora}.  
Fed-IT \citep{zhang2024buildingfederatedgptfederated} aggregates low-rank adapters across clients using standard federated averaging \citep{mcmahan2017communication} before updating the global model.  
To address inexact aggregation, FedEx-LoRA \citep{singhal2024exact} introduces an error matrix to correct residual errors, ensuring more precise updates.  
FLoRA \citep{wang2024flora} follows the same exact aggregation principle by stacking matrices and extends this approach to heterogeneous rank settings.  
FFA-LoRA \citep{sun2024improving} mitigates aggregation inexactness by freezing \( \mathbf{A} \) and updating only the trainable low-rank adapter, averaging the latter to compute the global update.  
In some scenarios, clients require heterogeneous LoRA ranks due to varying computational budgets \citep{zhao2018federated, li2019convergence}.  
Methods like HetLoRA \citep{hetero_lora} enable rank heterogeneity through self-pruning and sparsity-aware aggregation strategies, but incur significant overhead.
\\

\textbf{Differential Privacy (DP) and FL.}
A common limitation of standard FL frameworks is their susceptibility to privacy attacks, as clients publicly share model updates with a central server. 
To address this issue, DP is incorporated into FL methods to ensure the privacy of client updates. 
This work follows the approximate DP framework \citep{dwork2006differential, dwork2014algorithmic}, which provides formal privacy guarantees for model updates.
Privacy is enforced during training using the DP-SGD optimizer \citep{dgsgd}, which applies gradient clipping and noise injection to protect individual contributions. 
Since DP is preserved under composition and post-processing \citep{dwork2006differential, 10.14778/3503585.3503598}, the final global model update also retains DP guarantees.
Prior methods, such as Fed-IT and FedEx-LoRA, did not explicitly incorporate DP. 
This study extends these approaches to DP settings and benchmarks them alongside FFA-LoRA and the proposed method.

\section{Initialization in Fed-SB} \label{app:init}

Fed-SB adopts the initialization strategy introduced in LoRA-SB to fix the adapter matrices $B$ and $A$. Proper initialization is crucial, since $B$ and $A$ remain frozen during training. For instance, if $B$ were initialized to zero (as in standard LoRA), the product $BRA$ would remain zero throughout, preventing any learning. In contrast, initializing $B$ and $A$ as orthonormal matrices ensures well-scaled gradients and allows Fed-SB to nearly match the performance of full fine-tuning.

To construct $B$ and $A$, we approximate the optimal update by averaging the first-step update across a small set of samples. A truncated SVD of this estimated update is then used to initialize the adapters. This requires only a small fraction of the training data (typically $0.1\%$), leading to negligible overhead in computation and time. Since the update is computed layerwise, memory usage during initialization never exceeds that of subsequent Fed-SB fine-tuning and remains below that of LoRA. Empirical analysis in LoRA-SB~\cite{ponkshe2024initialization} shows that even $0.1\%$ of the samples is sufficient for stable initialization.

% \section{Initialization in Fed-SB}

% We borrow the initialization strategy used in LoRA-SB to fix $B$ and $A$ matrices. Initailization is critical in Fed-SB since the values of $B$ and $A$ matrices dont change throughout training. As an Example, consider using the LoRA initialization where $B$ is set to 0. No amount of training will change $BRA$ to being non-zero, implying no learning. Morover having orthonormal matrices for $B$ and $A$ results in optimum gradient scaling, and helps us almost match performance of full fine-tuning. 

% For initializing $B$ and $A$, we estimate the optimum update by averaging out the first step update of multiple samples. We then initialize $B$ and $A$ by using the truncated SVD decomposition of the estimated update. Our method relies on only few samples ( usually 1 \% of the training set), and thus results in negligible computation and training time overhead. The effect of varying number of samples has been well studied in LoRA-SB\cite{ponkshe2024initialization} and using  just 1 \% of the samles is generally more than sufficient. The first full finetuning step is calculated layerwise, and thus the memory never exceeds that of the subsequent Fed-SB finetuning (always below that of LoRA). 
\section{Extensions to Rank-Heterogeneous Setting}
\label{app:rank-hetero}
In real-world federated deployments, client devices often operate under diverse computational budgets and memory constraints. This naturally leads to \emph{rank-heterogeneous settings}, where different clients cannot train adapters of the same rank. Supporting such heterogeneity is important for practical adoption: while high-resource clients can benefit from richer low-rank subspaces, low-resource clients should still be able to participate meaningfully without being excluded from collaboration.

\subsection{Rank-Heterogeneous Fed-SB}

We extend Fed-SB to explicitly handle rank-heterogeneous clients while preserving its guarantees of exact aggregation. The key idea is to align all clients in a shared basis, chosen as the top $r_{\max}$ singular vectors of a reference weight matrix. Each client $i$ then selects a local rank budget $r_i \leq r_{\max}$ and optimizes within its most informative subspace:
\[
A_i = A[:, :r_i], \quad B_i = B[:r_i, :], \quad R_i = R[:r_i, :r_i].
\]
During aggregation, each client’s update $R_i$ is zero-padded (along rows and columns) to match the global dimension $r_{\max} \times r_{\max}$. This ensures that all updates are aligned in the same coordinate system and can be averaged exactly:
\[
R_{\text{agg}} = \frac{1}{c}\sum_{i=1}^c \text{pad}(R_i), \quad \Delta W = B R_{\text{agg}} A.
\]
In this formulation, low-rank clients contribute updates restricted to their subspaces, while high-rank clients provide richer information, and all updates combine seamlessly. Thus, Fed-SB can support heterogeneous client capabilities without loss of information, while maintaining exactness of aggregation.

\subsection{Experiments}

\begin{table}[!h]
\centering
\caption{Comparison of homogeneous and heterogeneous Fed-SB configurations for federated fine-tuning of Llama-3.2 3B on eight commonsense reasoning datasets.}
\small
\setlength{\tabcolsep}{1.5pt}
\begin{tabular}{lccccccccc}
\toprule
\textbf{Method} & \textbf{BoolQ} & \textbf{PIQA} & \textbf{SIQA} & \textbf{HellaS.} & \textbf{WinoG.} & \textbf{ARC-e} & \textbf{ARC-c} & \textbf{OBQA} & \textbf{Avg.} \\
\midrule
Homogeneous (all ranks = $120$)     & $64.86$ & $81.66$ & $74.87$ & $81.67$ & $75.22$ & $86.03$ & $70.56$ & $72.25$ & $75.89$ \\
Heterogeneous (effective rank = $120$) & $64.34$ & $81.50$ & $74.23$ & $81.02$ & $74.88$ & $85.89$ & $70.65$ & $71.62$ & $75.52$ \\
\midrule
Homogeneous (all ranks = $160$)     & $65.57$ & $82.37$ & $76.15$ & $84.10$ & $77.98$ & $86.62$ & $72.10$ & $73.63$ & $77.32$ \\
Heterogeneous (effective rank = $160$) & $64.83$ & $82.05$ & $76.43$ & $83.92$ & $77.53$ & $85.96$ & $71.90$ & $72.98$ & $76.95$ \\
\bottomrule
\end{tabular}
\label{tab:hom-vs-het}
\end{table}

To assess the effectiveness of our federated rank-heterogeneous approach, we extend the commonsense reasoning experiments with Llama-3.2 3B to heterogeneous rank settings.
For a fair comparison, we match the total rank budget of the homogeneous baselines ($120^2$ and $160^2$) by assigning client-specific ranks of $\{40, 40, 120, 120, 200\}$ and $\{60, 60, 180, 200, 220\}$, respectively. 
As shown in Table~\ref{tab:hom-vs-het}, Fed-SB achieves performance comparable to its homogeneous counterparts in both cases, demonstrating strong robustness to rank heterogeneity.

\section{Effect of Varying Rank on Fed-SB Performance} \label{app:vary-rank}

To further investigate the role of the rank parameter $r$, we conduct ablation studies of Fed-SB in both standard federated and privacy-preserving settings. In the non-private setting, we evaluate Mistral-7B and Gemma-2 9B fine-tuned on a subset of MetaMathQA across a wide range of rank values ($r=32$–$240$), with results reported in Table~\ref{tab:fed-sb-rank-ablation}. While selecting an optimal rank remains an open problem for all LoRA-based methods, our experiments show that intermediate values ($r=120$–$200$) generally offer the best trade-off between performance and efficiency. 

In the privacy-preserving setting, we evaluate centralized private Fed-SB using BERT-base fine-tuned on SNLI across ranks ranging from $16$ to $80$, with results presented in Table~\ref{tab:rank-epsilon}. Here, we observe that ranks in the range of $48$–$80$ consistently achieve the strongest performance across different privacy budgets.

Overall, owing to Fed-SB’s lightweight design, we can scale to higher ranks when resources allow, yielding further performance improvements without incurring memory bottlenecks.

\begin{table}[!h]
\centering
\caption{Effect of varying Fed-SB rank ($r$) on federated fine-tuning performance of Mistral-7B and Gemma-2 9B, evaluated on GSM8K and MATH. Best results are in \textbf{bold}.}
\small
\begin{tabular}{c|cc|cc}
\toprule
\multirow{2}{*}{\textbf{Rank}} & \multicolumn{2}{c|}{\textbf{Mistral-7B}} & \multicolumn{2}{c}{\textbf{Gemma-2 9B}} \\
\cmidrule(lr){2-5}
 & \textbf{GSM8K ($\uparrow$)} & \textbf{MATH ($\uparrow$)} & \textbf{GSM8K ($\uparrow$)} & \textbf{MATH ($\uparrow$)} \\
\midrule
$32$  & $53.76$ & $12.88$ & $73.78$ & $35.92$ \\
$64$  & $53.93$ & $13.31$ & $74.32$ & $36.05$ \\
$96$  & $54.38$ & $13.56$ & $74.66$ & $36.23$ \\
$120$ & $54.44$ & $\mathbf{14.06}$ & $74.75$ & $36.36$ \\
$160$ & $54.81$ & $13.74$ & $76.88$ & $36.94$ \\
$200$ & $56.18$ & $13.76$ & $77.03$ & $\mathbf{37.56}$ \\
$240$ & $\mathbf{56.32}$ & $13.74$ & $\mathbf{77.14}$ & $37.34$ \\
\bottomrule
\end{tabular}
\label{tab:fed-sb-rank-ablation}
\end{table}

\begin{table}[!h]
\centering
\caption{Effect of varying Fed-SB rank ($r$) on centralized private fine-tuning performance of BERT-base, evaluated on SNLI, under various privacy budgets ($\epsilon$). A smaller $\epsilon$ indicates a stricter privacy budget. Best results are in \textbf{bold}.}
\small
\begin{tabular}{c|ccccc}
\toprule
\multirow{2}{*}{\textbf{Rank}} & \multicolumn{5}{c}{\textbf{Accuracy ($\uparrow$)}} \\
\cmidrule(lr){2-6}
 & $\mathbf{\epsilon=1}$ & $\mathbf{\epsilon=3}$ & $\mathbf{\epsilon=5}$ & $\mathbf{\epsilon=7.5}$ & $\mathbf{\epsilon=10}$ \\
\midrule
$16$ & $73.26$ & $74.21$ & $73.68$ & $76.23$ & $75.80$ \\
$24$ & $73.65$ & $74.78$ & $73.92$ & $76.88$ & $76.02$ \\
$32$ & $73.99$ & $75.09$ & $74.45$ & $77.01$ & $76.24$ \\
$48$ & $\mathbf{75.98}$ & $75.70$ & $76.58$ & $76.77$ & $77.96$ \\
$64$ & $75.81$ & $\mathbf{77.07}$ & $\mathbf{77.59}$ & $78.75$ & $78.08$ \\
$80$ & $75.93$ & $76.87$ & $77.35$ & $\mathbf{78.81}$ & $\mathbf{78.23}$ \\
\bottomrule
\end{tabular}
\label{tab:rank-epsilon}
\end{table}

\section{Memory and Training Time Details} \label{app:mem_time}

\textbf{Memory.} As discussed in Section \ref{subsec:eff}, our method reduces training memory requirements compared to existing approaches, primarily due to a significantly smaller number of trainable parameters. We benchmark the peak per-client training memory for all models and configurations used in our study in Table \ref{tab:memory-comparison}. Notably, these results reflect the worst-case setting for Fed-SB, with the highest rank ($r=200$) used in our experiments.

\begin{table}[!h]
\centering
\caption{Peak per-client training memory (in GB) for different methods across the various models used in this work. Fed-SB consistently exhibits lower memory usage across all model configurations.}
\small
\begin{tabular}{lcccc}
\toprule
\multirow{2}{*}{\textbf{Method}} & \multirow{2}{*}{\textbf{Rank}} & \multicolumn{3}{c}{\textbf{Peak Memory (GB)}} \\
\cmidrule(lr){3-5}
 & & \textbf{Mistral-7B} & \textbf{Gemma-2 9B} & \textbf{Llama-3.2 3B} \\
\midrule
FedIT & $32$ & $15.92$ & $19.99$ & $7.71$ \\
FFA-LoRA & $32$ & $15.51$ & $19.44$ & $7.46$ \\
FedEx-LoRA & $32$ & $15.92$ & $19.99$ & $7.71$ \\
FLoRA & $32$ & $15.92$ & $19.99$ & $7.71$ \\
\rowcolor{cyan!10} Fed-SB & $200$ & $15.18$ & $19.03$ & $7.30$ \\
\bottomrule
\end{tabular}
\label{tab:memory-comparison}
\end{table}

\textbf{Training Time.} Fed-SB introduces a negligible training time overhead compared to other methods, primarily due to its lightweight initialization process. To quantify this, we measure the additional training time introduced by Fed-SB relative to the average per-epoch training time per client in baseline methods. These measurements are conducted across the various experimental settings described in our paper. As shown in Table \ref{tab:time-overhead}, the overhead remains consistently minimal, approximately $2\%$, across multiple model configurations.

\begin{table}[!h]
\centering
\caption{Training time overhead introduced by Fed-SB ($r=200$) relative to the average per-epoch training time per client in baseline methods. The overhead is minimal ($\approx 2\%$) across different model configurations.}
\small
\begin{tabular}{lcc}
\toprule
\textbf{Model} & \textbf{Fed-SB Overhead (mm:ss)} & \textbf{Avg. Epoch Time / Client (mm:ss)} \\
\midrule
Mistral-7B & 00:13 & 09:22 \\
Gemma-2 9B & 00:16 & 12:43 \\
Llama-3.2 3B & 01:43 & 62:54 \\
\bottomrule
\end{tabular}
\label{tab:time-overhead}
\end{table}

\section{Experiment Details} \label{app:hyperparams}

We conduct experiments on a single NVIDIA A6000 GPU (48 GB) and report the average results from three independent runs. All non-private models are trained using the AdamW optimizer \citep{loshchilov2019decoupledweightdecayregularization}. 
To optimize memory efficiency, all base models (except BERT) are loaded in \texttt{\textbf{torch.bfloat16}}.
In line with LoRA-SB \cite{ponkshe2024initialization}, we initialize the adapter matrices using just $1/{1000}$ ($0.1\%$) of the respective training dataset size.
\\

\textbf{Instruction Tuning.}
Table \ref{tab:hyper_it} presents the key hyperparameters and configurations for Mistral-7B, Gemma-2 9B, and Llama-3.2 3B. 
Our setup closely follows previous works \citep{cr-dataset, ponkshe2024initialization}, ensuring consistency with established best practices.
For the baseline experiments, we further set $\alpha = 16$, consistent with prior literature \citep{singhal2024exact, sun2024improving}.
We additionally perform a sweep over the learning rate for our experiments.
\\

\textbf{(Federated) Private Fine-Tuning.}
Table \ref{tab:hyper_bert} outlines the key hyperparameters and configurations for BERT-base in both centralized private and federated private settings. 
We train our models using the Opacus library \citep{yousefpour2021opacus} with the DP-SGD optimizer \citep{dgsgd}. 
Following standard DP practices, we set the privacy parameter as \(\delta = \frac{1}{|\text{trainset}|}\). 
To ensure adherence to best practices, we adopt hyperparameter choices from prior works \citep{singhal2024exact, lora}. For baseline experiments, we additionally set \(\alpha = 16\), aligning with previous literature \citep{singhal2024exact, sun2024improving}. 
We additionally perform a sweep over the learning rate and maximum gradient norm in DP-SGD for our experiments.

\begin{table}[!h]
\centering
\caption{Hyperparameter settings for Mistral-7B, Gemma-2 9B, and Llama-3.2 3B.}
\begin{tabular}{lccc}
\toprule
 & \textbf{Mistral-7B} & \textbf{Gemma-2 9B} & \textbf{Llama-3.2 3B}\\
\midrule
Optimizer        & AdamW      & AdamW      & AdamW      \\
Learning Rate    & $5\mathrm{e}{-4}$ & $5\mathrm{e}{-4}$ & $2\mathrm{e}{-4}$ \\
LR Scheduler     & Cosine     & Cosine     & Linear     \\
Warmup Ratio     & $0.02$     & $0.02$     & $0.02$     \\
Batch Size       & $1$        & $1$        & $8$        \\
Grad Acc. Steps  & $32$       & $32$       & $24$       \\
Max. Seq. Len    & $512$      & $512$      & $256$      \\
Dropout          & $0$        & $0$        & $0$        \\
\# Clients   & $25$       & $25$       & $5$        \\
Local Epochs     & $1$        & $2$        & $2$        \\
Rounds           & $1$        & $1$        & $1$        \\
\bottomrule
\end{tabular}

\label{tab:hyper_it}
\end{table}

\begin{table}[!h]
\centering
\caption{Hyperparameter settings for BERT-base in centralized private and federated private setups.}
\begin{tabular}{lcc}
\toprule
 & \textbf{BERT-base (centralized)} & \textbf{BERT-base (federated)} \\
\midrule
Optimizer        & DP-SGD      & DP-SGD      \\
Learning Rate    & $5\mathrm{e}{-4}$ & $5\mathrm{e}{-4}$ \\
LR Scheduler     & -    & -   \\
Warmup Ratio     & 0     & 0 \\
Batch Size       & $32$        & $32$\\
Max. Phy. Batch Size       & $8$        & $8$\\
Max. Seq. Len    & $128$      & $128$\\
Dropout          & $0.05$        & $0.05$\\
Max. Grad. Norm & $0.1$ & $0.1$ \\
Epochs & $3$ & - \\
\midrule
\# Clients   & -      & $3$\\
Local Epochs     & -        & $6$\\
Rounds           & -        & $1$\\
\bottomrule
\end{tabular}

\label{tab:hyper_bert}
\end{table}

\section{Dataset Details} \label{app:datasets}

\textsc{\textbf{CommonSense170K}} is a large-scale dataset that brings together eight benchmarks designed to assess various aspects of commonsense reasoning \citep{cr-dataset}. Below is an overview of its constituent datasets:

\begin{enumerate}
\item \textbf{PIQA} \citep{bisk2020piqa} evaluates physical commonsense by asking models to determine the most reasonable action in a given scenario.
\item \textbf{ARC Easy (ARC-e)} \citep{clark2018think} consists of elementary-level science questions, serving as a fundamental test of a model’s reasoning abilities.
\item \textbf{OBQA} \citep{mihaylov2018can} presents knowledge-intensive, open-book multiple-choice questions that require multi-step reasoning and retrieval.
\item \textbf{HellaSwag} \citep{zellers2019hellaswag} tests contextual reasoning by asking models to predict the most plausible continuation of a passage from a set of candidates.
\item \textbf{SIQA} \citep{sap2019socialiqa} examines social intelligence, requiring models to predict human actions and their social consequences.
\item \textbf{ARC Challenge (ARC-c)} \citep{clark2018think} includes difficult multiple-choice science questions that demand deeper logical inference beyond statistical co-occurrence.
\item \textbf{BoolQ} \citep{clark2019boolq} consists of naturally occurring yes/no questions, requiring models to infer relevant information from provided contexts.
\item \textbf{WinoGrande} \citep{sakaguchi2021winogrande} assesses commonsense knowledge through binary-choice sentence completion tasks that require resolving ambiguities.
\end{enumerate}

The \textbf{MetaMathQA} dataset \citep{metamathqa} constructs mathematical questions by reformulating them from different viewpoints while preserving their original knowledge content. We assess its performance using two well-established benchmarks: (1) \textbf{GSM8K} \citep{gsm8k}, a collection of grade-school-level math problems requiring step-by-step reasoning to reach a solution, and (2) \textbf{MATH} \citep{math}, which consists of high-difficulty, competition-style problems designed to test advanced mathematical skills.
\\

\textbf{Stanford Natural Language Inference (SNLI)} is a widely used benchmark for assessing textual entailment models in natural language understanding. 
It contains approximately 570,000 sentence pairs, each categorized into one of three classes: entailment, contradiction, or neutral, requiring models to infer the relationship between a given premise and hypothesis.

\newpage
\section{Additional Plots} \label{app:plots}
\begin{figure*}[!h]
    \centering
    % First subfigure
    \begin{subfigure}{0.49\textwidth}
        \centering
        \includegraphics[width=\textwidth]{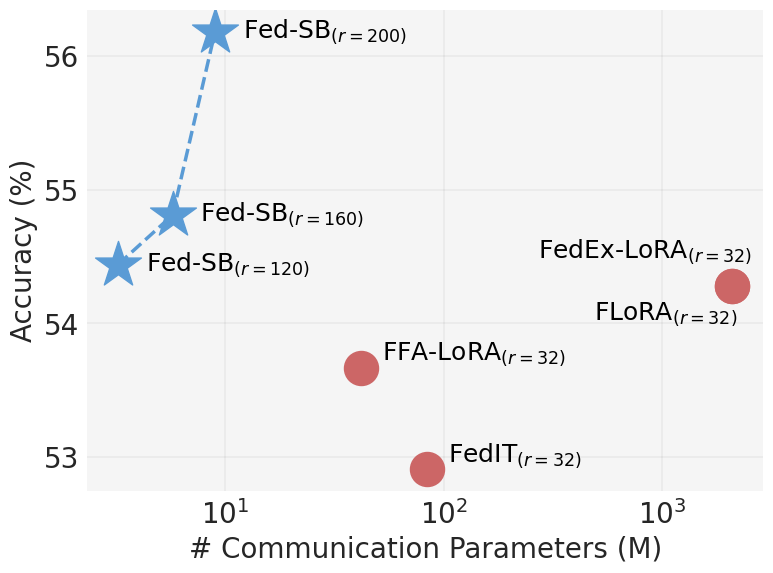}
        \caption{Mistral-7B (GSM8K)}
        \label{fig:fed-mistral}
    \end{subfigure}
    \begin{subfigure}{0.49\textwidth}
        \centering
        \includegraphics[width=\textwidth]{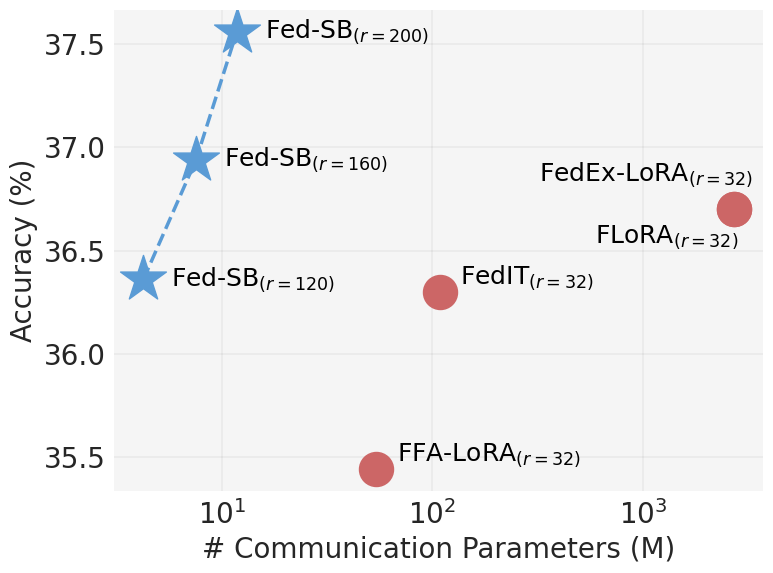}
        \caption{Gemma-2 9B (MATH)}
        \label{fig:fed-math}
    \end{subfigure}
    \begin{subfigure}{0.49\textwidth}
        \centering
        \includegraphics[width=\textwidth]{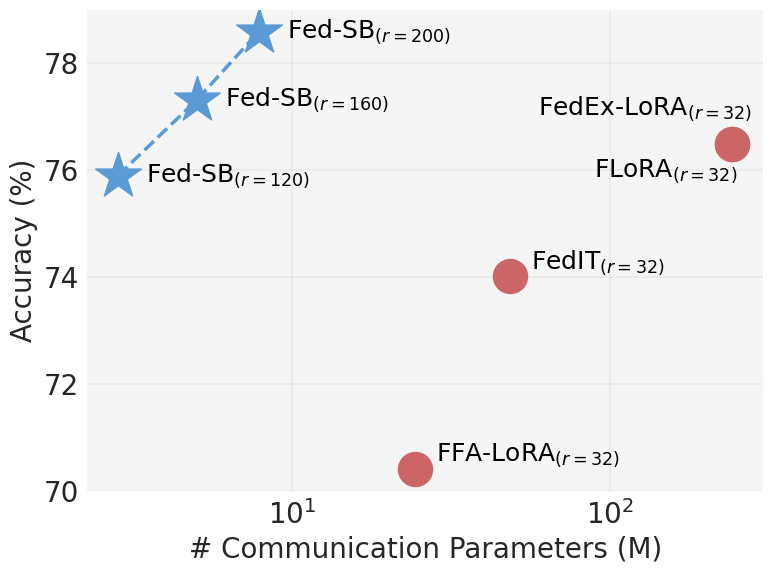}
        \caption{Llama-3.2 3B (Commonsense)}
        \label{fig:fed-llama}
    \end{subfigure}
    \caption{Performance vs. number of communicated parameters (in log scale) for various methods in federated fine-tuning across multiple models on arithmetic and commonsense reasoning tasks.}
    \label{fig:results-it}
\end{figure*}

\begin{figure*}[!h]
    \centering
    % First subfigure
    \begin{subfigure}{0.49\textwidth}
        \centering
        \includegraphics[width=\textwidth]{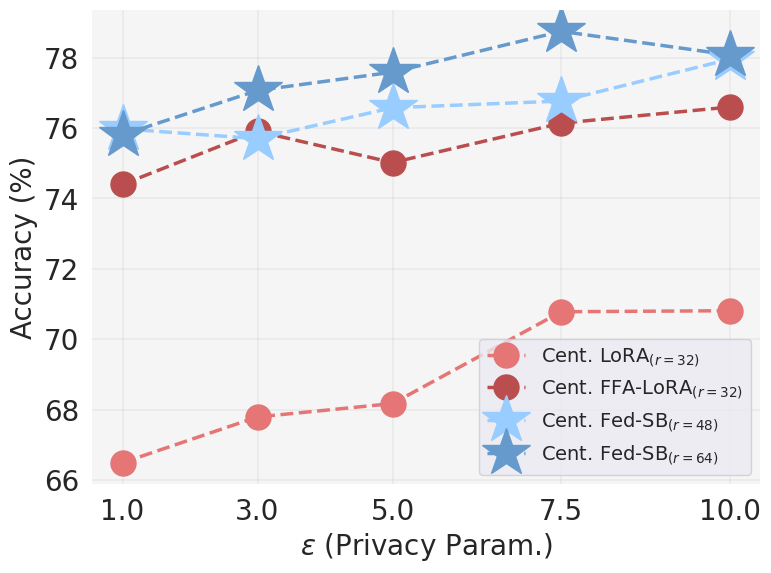}
        \caption{Centralized Private}
        \label{fig:dp-central-vary}
    \end{subfigure}
    \begin{subfigure}{0.49\textwidth}
        \centering
        \includegraphics[width=\textwidth]{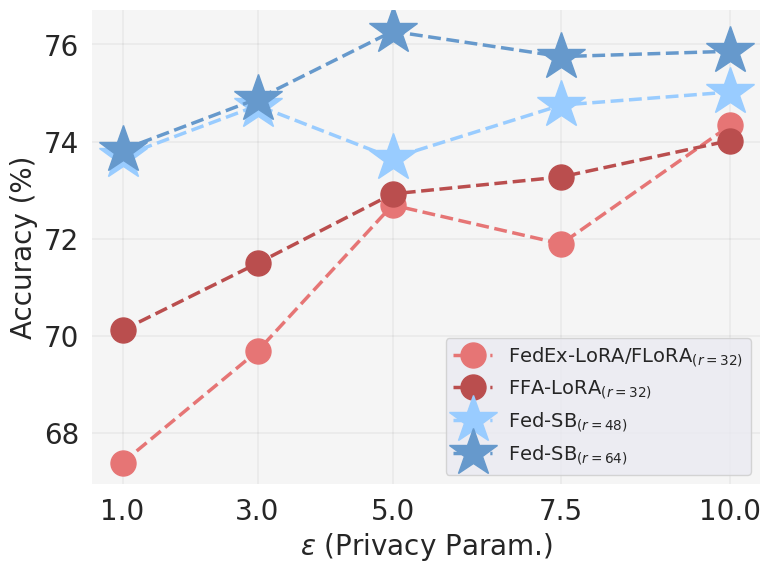}
        \caption{Federated Private}
        \label{fig:dp-fed-vary}
    \end{subfigure}
    \caption{Performance comparison of various methods in centralized (Cent.) private and federated private fine-tuning (BERT-base) on SNLI across varying values of $\epsilon$.}
    \label{fig:plots-dp-vary}
\end{figure*}

\begin{figure*}[!h]
    \centering
    % First subfigure
    \begin{subfigure}{0.49\textwidth}
        \centering
        \includegraphics[width=\textwidth]{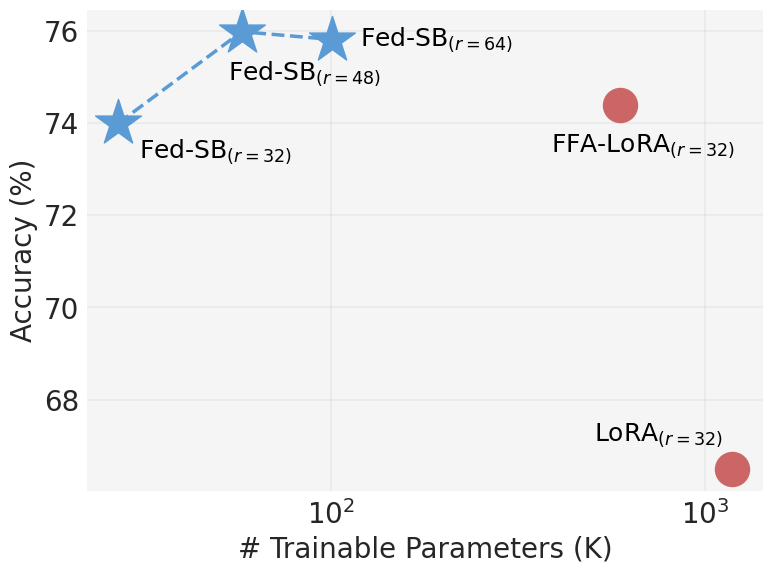}
        \caption{$\epsilon = 1$}
        \label{fig:dp-central-eps-1}
    \end{subfigure}
    \begin{subfigure}{0.49\textwidth}
        \centering
        \includegraphics[width=\textwidth]{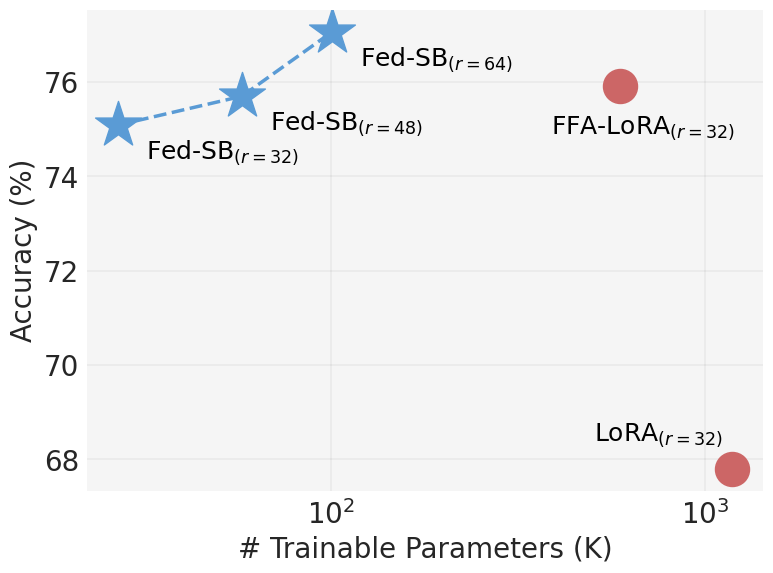}
        \caption{$\epsilon = 3$}
        \label{fig:dp-central-eps-3}
    \end{subfigure}
    \begin{subfigure}{0.49\textwidth}
        \centering
        \includegraphics[width=\textwidth]{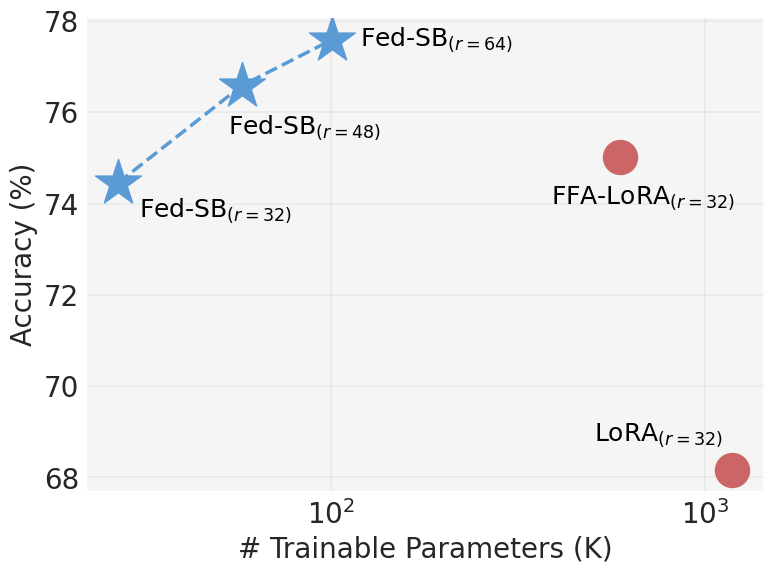}
        \caption{$\epsilon = 5$}
        \label{fig:dp-central-eps-5}
    \end{subfigure}
    %\hfill
    % Second subfigure
    \begin{subfigure}{0.49\textwidth}
        \centering
        \includegraphics[width=\textwidth]{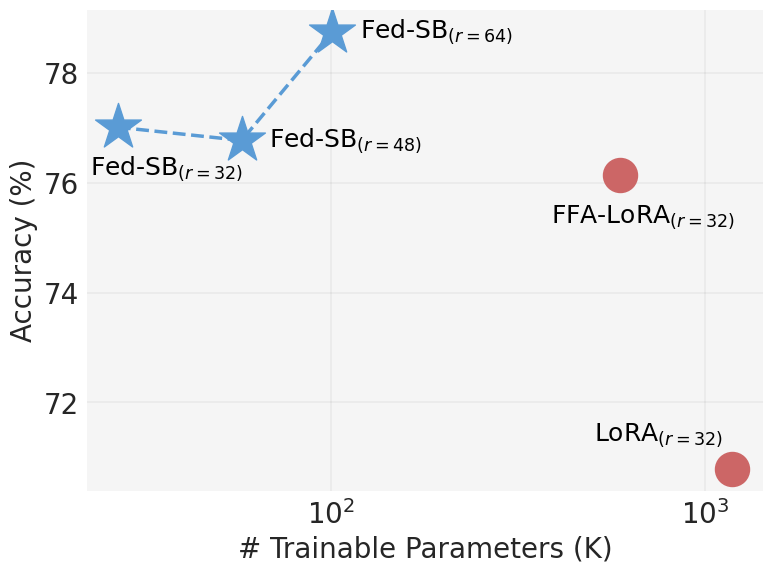}
        \caption{$\epsilon = 7.5$}
        \label{fig:dp-central-eps-7.5}
    \end{subfigure}
    \begin{subfigure}{0.49\textwidth}
        \centering
        \includegraphics[width=\textwidth]{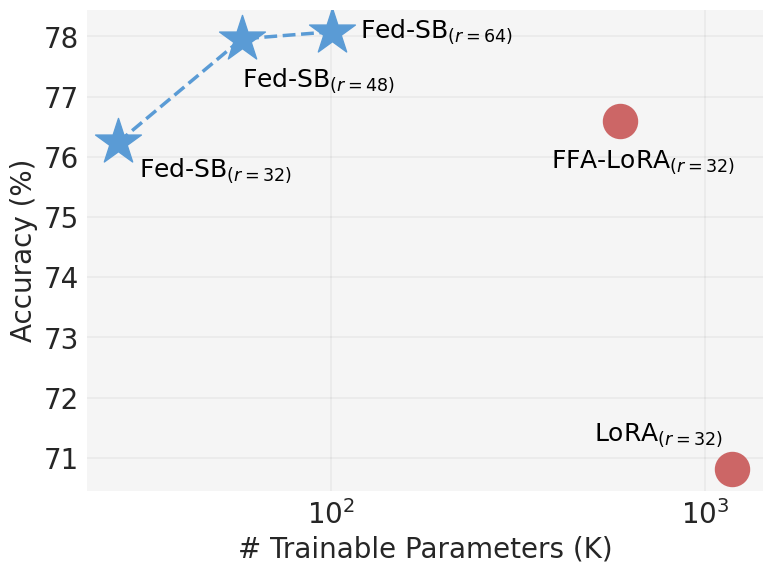}
        \caption{$\epsilon = 10$}
        \label{fig:dp-central-eps-10}
    \end{subfigure}
    \caption{Performance vs. number of trainable parameters (in log scale) for various methods in centralized private fine-tuning (BERT-base) across different privacy budgets ($\epsilon$).}
    \label{fig:plots-dp-central-eps}
\end{figure*}

\begin{figure*}[ht]
    \centering
    % First subfigure
    \begin{subfigure}{0.49\textwidth}
        \centering
        \includegraphics[width=\textwidth]{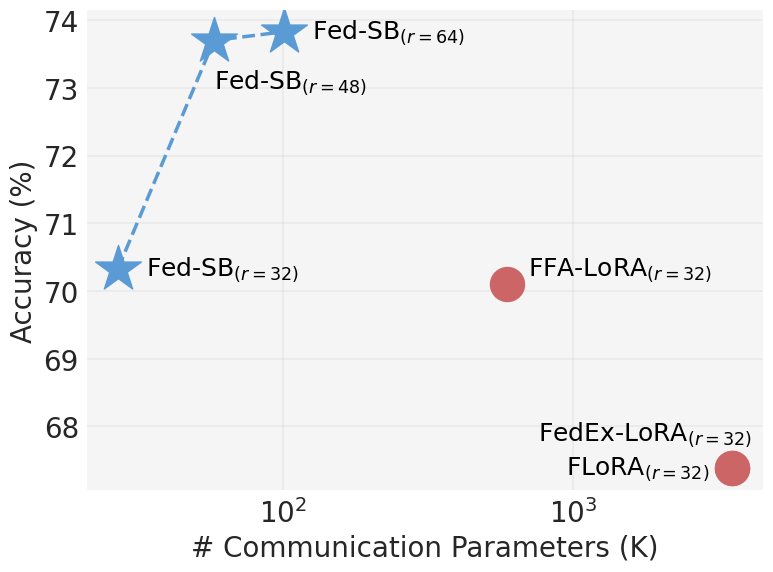}
        \caption{$\epsilon = 1$}
        \label{fig:dp-fed-eps-1}
    \end{subfigure}
    \begin{subfigure}{0.49\textwidth}
        \centering
        \includegraphics[width=\textwidth]{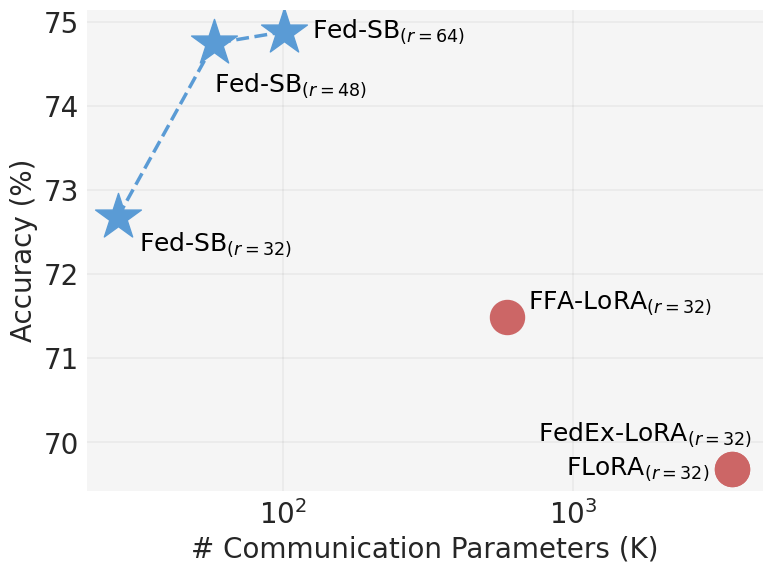}
        \caption{$\epsilon = 3$}
        \label{fig:dp-fed-eps-3}
    \end{subfigure}
    \begin{subfigure}{0.49\textwidth}
        \centering
        \includegraphics[width=\textwidth]{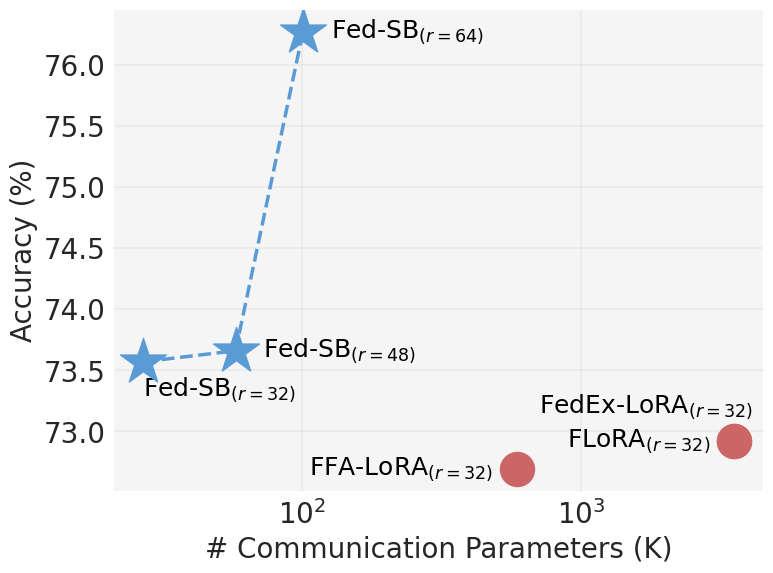}
        \caption{$\epsilon = 5$}
        \label{fig:dp-fed-eps-5}
    \end{subfigure}
    %\hfill
    % Second subfigure
    \begin{subfigure}{0.49\textwidth}
        \centering
        \includegraphics[width=\textwidth]{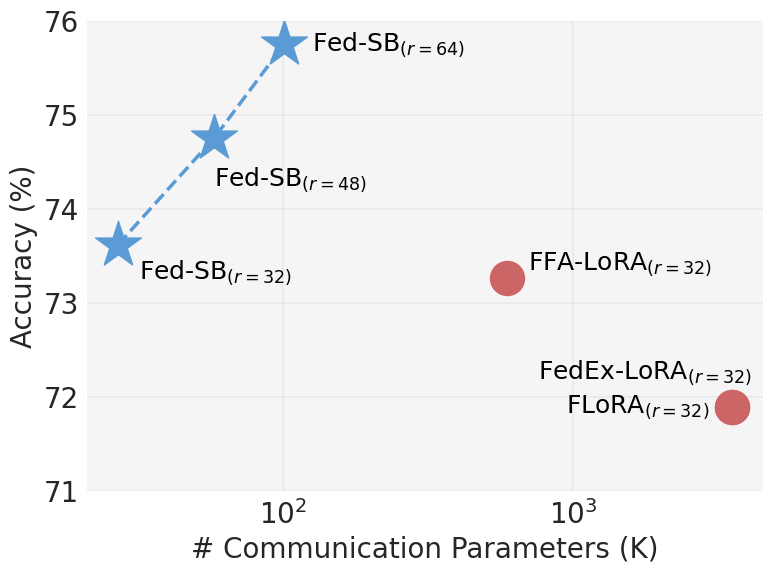}
        \caption{$\epsilon = 7.5$}
        \label{fig:dp-fed-eps-7.5}
    \end{subfigure}
    \begin{subfigure}{0.49\textwidth}
        \centering
        \includegraphics[width=\textwidth]{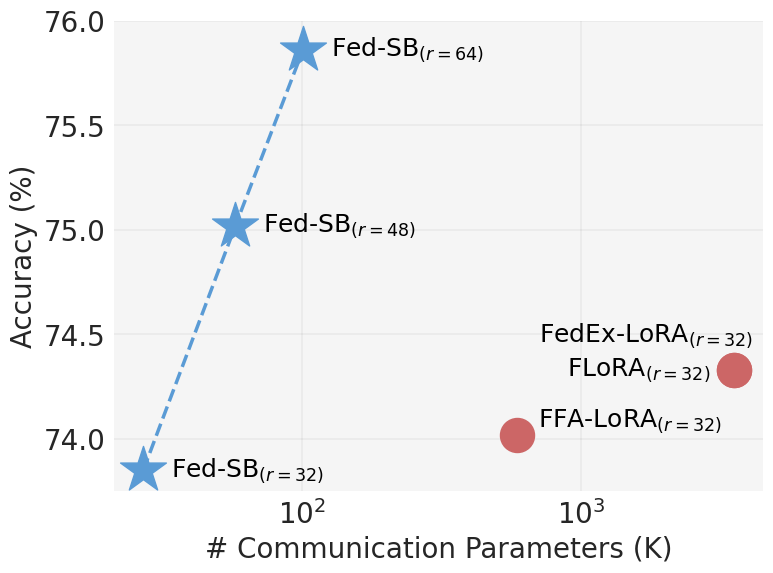}
        \caption{$\epsilon = 10$}
        \label{fig:dp-fed-eps-10}
    \end{subfigure}
    \caption{Performance vs. number of communicated parameters (in log scale) for various methods in federated private fine-tuning (BERT-base) across different privacy budgets ($\epsilon$).}
    \label{fig:plots-dp-fed-eps}
\end{figure*}

\section{Use of Large Language Models}
Our use of LLMs is limited to minor writing assistance, for example, correcting grammar and clarifying sentences.

\end{document}